\documentclass{article}

  \usepackage[preprint]{neurips_2026}

\usepackage[utf8]{inputenc} % allow utf-8 input
\usepackage[T1]{fontenc}    % use 8-bit T1 fonts
\usepackage{hyperref}       % hyperlinks
\usepackage{url}            % simple URL typesetting
\usepackage{booktabs}       % professional-quality tables
\usepackage{amsfonts}       % blackboard math symbols
\usepackage{nicefrac}       % compact symbols for 1/2, etc.
\usepackage{microtype}      % microtypography
\usepackage{xcolor}         % colors
\usepackage{booktabs} 
\usepackage{graphicx}
\usepackage{subfigure}
\usepackage{caption}
\usepackage{amsmath}
\usepackage{amssymb}
\usepackage{algorithm}
\usepackage{algorithmic}
\usepackage{placeins}
\usepackage{wrapfig}
\usepackage{amsthm}
\usepackage{colortbl}
\usepackage{diagbox}
\usepackage{framed}
\usepackage{mdframed}
\definecolor{shadecolor}{rgb}{0.92,0.92,0.92}

\newcommand{\bc}[1]{\mbox{\boldmath $\mathcal{#1}$}}

\newcommand{\mf}[1]{\mathbf{#1}}
\newcommand{\mb}[1]{\mathbb{#1}}

\newcommand{\T}{\mathrm{T}}

\usepackage{ulem}

\definecolor{orange}{RGB}{255,107,0}

\newtheorem{theorem}{Theorem}

\newtheorem{remark}[theorem]{Remark}
\title{StreamPhy: Streaming Inference of High-Dimensional Physical Dynamics via State Space Models}

\author{%
	Panqi Chen\textsuperscript{$\mathrm{1}$} \quad Yifan Sun\textsuperscript{$\mathrm{1}$}   \quad Shikai Fang\textsuperscript{$\mathrm{1}$} \quad Xiao Fu\textsuperscript{$\mathrm{2}$} \quad Lei Cheng\textsuperscript{$\mathrm{1}$}  \\
	\textsuperscript{$\mathrm{1}$} College of Information Science and Electronic Engineering, Zhejiang University\\
	\textsuperscript{$\mathrm{2}$} School of EECS,Oregon State University 	
}

\begin{document}

\maketitle
\begin{abstract}

%Inferring the evolution of physical fields from sparse  measurements in real time is a fundamental challenge in  science and engineering. The challenges often arise from the high dimensionality and multi-modality (e.g., spatio-temporal) nature of the fields and the presence of noisy, incomplete, and irregularly sampled observations. Although many latent diffusion-based methods enable continuous spatiotemporal modeling, they remain largely limited to offline settings and require full temporal observations during training and inference.  We propose \textbf{StreamPhy}, an end-to-end framework for streaming inference of full-field physical dynamics from irregular sparse observations. StreamPhy integrates a masking-based observation encoder with a structured state-space model, enabling efficient updates from irregular observations. At its core, we propose a Functional Tensor Feature-wise Linear Modulation (FT-FiLM) module, which provides an expressive conditional representation for continuous fields. We theoretically show that FT-FiLM  generalizes the functional Tucker model, offering enhanced expressiveness for complex dynamics. Extensive experiments across multiple physical systems demonstrate that StreamPhy achieves state-of-the-art performance in both accuracy and efficiency under diverse challenging observation settings.

Inferring the evolution of high-dimensional and multi-modal (e.g., spatio-temporal) physical fields from  irregular sparse measurements in real time is a fundamental challenge in science and engineering. Existing approaches, including diffusion-based generative models and functional tensor methods, typically operate in offline settings, depend on full temporal observations, or incur substantial inference cost. We propose StreamPhy, an end-to-end framework that enables efficient and accurate streaming inference of full-field physical dynamics from incoming irregular sparse measurements. The framework integrates a data-adaptive observation encoder that is robust to arbitrary observation patterns, a structured state-space model that supports memory-efficient online updates across irregular time intervals, and an expressive Functional Tensor Feature-wise Linear Modulation (FT-FiLM) decoder for continuous-field generation. We prove that FT-FiLM is  more expressive than the functional Tucker model, admitting a richer function class for handling complex dynamics.  Experiments on three representative physical systems under challenging sampling patterns show that StreamPhy consistently outperforms state-of-the-art baselines, with at least 48\% improvement in accuracy and up to 20--100$\times$ faster inference than diffusion-based methods. 

\end{abstract}

\section{Introduction}
The real-time inference of spatiotemporal physical fields from streaming observations is of fundamental importance across a wide range of scientific and engineering applications. In applications such as environmental sensing~\cite{bttd}, fluid dynamics~\cite{pnas}, and structural health monitoring~\cite{structural}, observations are often sparse, irregular, and acquired under time-varying measurement patterns. These characteristics pose significant challenges for building models that can efficiently assimilate incoming data, maintain temporal consistency, and produce accurate full-field reconstructions.

% {\red Recent advances in generative models, including diffusion models~\cite{huang2024diffusionpde, du2024conditional, li2024learning_nmi2, sdift2025} and flow-matching models~\cite{functionalFLowMatching, physense}, have shown strong performance in physics-based tasks such as PDE solving, super-resolution, field reconstruction, and sensor placement optimization. A common paradigm in generative physical modeling follows two stages: first, an encoder is learned to map observations from the data space to a latent space, with vision transformers~\cite{fundiff} (ViT) and more recent joint embedding predictive architectures~\cite{JEPA} (JEPAs) serving as representative examples; second, a generative model is learned in the latent space to capture the distribution of latent variables. During inference, diffusion models further exploit observation likelihoods through diffusion posterior sampling (DPS)~\cite{chung2023diffusion} and then map the generated latent variables back to the observation space. } 
% {\bf {\red how is this part realated to streaming physical fields and spatiotemporal?}}

Recent years have witnessed growing recognition that modern artificial intelligence (AI) techniques hold strong promise for accurately modeling spatiotemporal physical fields. One line of work~\cite{huang2024diffusionpde, du2024conditional, li2024learning_nmi2, fundiff} draws inspiration from two-stage diffusion paradigms. Specifically, these methods first encode observations into a latent space using  architectures such as vision transformers~\cite{fundiff} or joint embedding predictive architectures~\cite{JEPA}, and then learn a generative model over the latent variables. During inference, techniques such as diffusion posterior sampling (DPS)~\cite{chung2023diffusion} are employed to incorporate observations. Despite their strong empirical performance, these approaches typically rely on vectorized or matrix-form representations, making them readily compatible with off-the-shelf AI models (e.g., CNNs~\cite{salimans2017pixelcnn} and Transformers~\cite{attention}) but less effective at explicitly capturing the intrinsic spatiotemporal structure of physical fields. Another line of work adopts a tensor decomposition perspective, often combined with functional representation learners such as implicit neural representations (INRs) to accommodate continuous domains~\cite{OFTD2025, catte2025, luo2023lowrank, fang2023functional}. By explicitly modeling the multilinear structure of spatiotemporal data, tensor-based methods are better suited to preserve cross-dimensional relationships inherent in physical systems. Nonetheless, many existing approaches are primarily designed for batch settings rather than streaming scenarios, e.g.,~\cite{luo2023lowrank, catte2025}. A notable exception is~\cite{OFTD2025}, which supports incremental updates with  arriving observations, rendering it naturally well-suited for streaming spatiotemporal reconstruction.

\begin{figure}[t]
	\centering
	\includegraphics[width=1\linewidth]{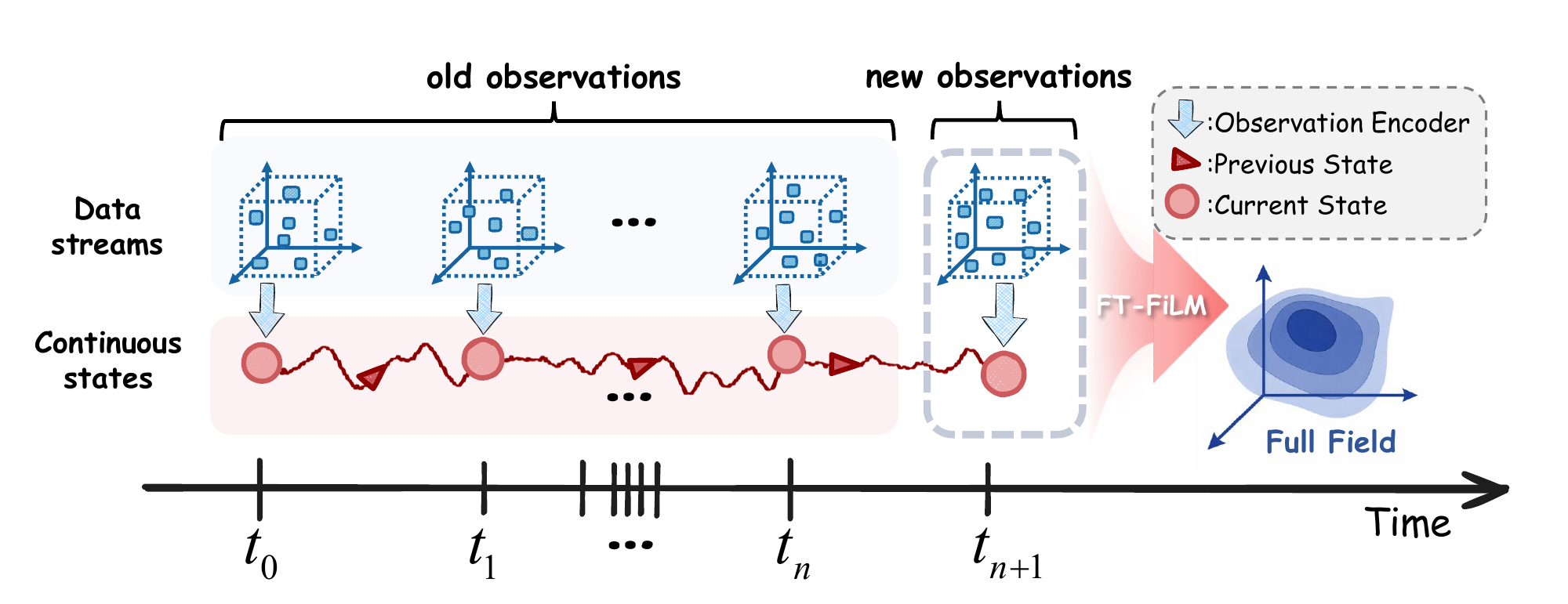}
	\caption{Semantic illustration of the proposed StreamPhy framework.}
	\label{fig:flowchart}
	\vspace{-0.3in}
\end{figure}

% In parallel, functional tensor decomposition methods~\cite{OFTD2025, catte2025, luo2023lowrank, fang2023functional} offer a complementary viewpoint by explicitly exploiting the multilinear structure inherent in continuous spatiotemporal data. In particular, \cite{OFTD2025} supports incremental updates with newly arriving observations, rendering it naturally well-suited for streaming scenarios.

 These methods all provide viable options for physics field estimation to a good extent, but some limitations remain. For example, diffusion-based models~\cite{sdift2025, du2024conditional, li2024learning_nmi2} are limited to offline settings and require full temporal observations during training and inference. In addition, their two-stage nature could increase the risk of error propagation. The functional tensor approaches~\cite{luo2023lowrank, OFTD2025, catte2025} rely on implicit neural representation (INR)~\cite{sitzmann2020implicit} and thus only fit to given data,lacking the ability to generalize to future data effectively.

{ \noindent
{\bf Contributions.}
To enable streaming inference of continuous spatiotemporal dynamics from sparse and irregular observations, we propose  \textbf{StreamPhy}, an end-to-end framework built upon HiPPO-based state-space model (SSM)~\cite{hippo,s4,lssl}. SSM offer a principled and efficient mechanism for sequential modeling with online updates and long-range dependency capture, while recent advances such as Mamba~\cite{mamba} have demonstrated their strong long-context modeling capability. However, applying SSMs to physical data is nontrivial, as scientific observations are often irregularly sampled in both space and time, and reconstructing continuous spatial fields from latent states demands expressive functional decoding. To address these challenges, StreamPhy integrates three key components: a {\it data-adaptive observation encoder} that transforms arbitrarily sampled measurements into structured latent representations, a {\it HiPPO-based SSM} for memory-efficient temporal evolution modeling, and a novel {\it Functional Tensor Feature-wise Linear Modulation (FT-FiLM) decoder} for reconstructing continuous spatial fields. The proposed FT-FiLM offers  greater expressive power than existing functional tensor models through flexible feature-wise modulation, enabling more effective modeling of complex spatiotemporal dynamics. A semantic illustration of StreamPhy is provided in Fig.~\ref{fig:flowchart}.}

% {\blue
% \noindent
% {\bf Contributions.} To address these challenges, we propose StreamPhy, an end-to-end framework for streaming inference of continuous spatiotemporal dynamics from sparse and irregular observations. StreamPhy integrates three key components: a data-adaptive observation encoder, a structured state-space model (SSM), and an expressive functional decoder. 

% First, we develop neural observation encoders that transform arbitrarily sampled measurements into structured latent representations, enabling robust feature extraction under heterogeneous observation patterns and seamless compatibility with sequence modeling. 

% Second, the encoded representations are processed by an SSM to capture temporal evolution and long-range dependencies in a memory-efficient streaming manner, without storing historical data. 

% Third, we introduce a novel Functional Tensor Feature-wise Linear Modulation (FT-FiLM) module for decoding latent states into continuous spatial fields. FT-FiLM generalizes existing functional tensor models through flexible functional modulation, offering enhanced expressive power for modeling complex spatiotemporal dynamics. 

% %By integrating these components into a unified framework, StreamPhy overcomes key limitations of both generative and tensor-based approaches while enabling efficient and scalable stream inference. 
% A semantic illustration of StreamPhy is provided in Fig.~\ref{fig:flowchart}.
% }

We evaluate the proposed method on three representative physical systems with diverse sampling patterns. Experimental results demonstrate that StreamPhy consistently outperforms diffusion-based methods and online tensor approaches by a significant margin in both accuracy and efficiency, achieving at least a 48\% improvement in VRMSE over state-of-the-art methods and up to $20\sim 100\times$ faster inference compared to diffusion-based models. These results highlight the effectiveness of StreamPhy for real-time streaming inference.
%{\red we could itemize the above if you want. But introduction could stop here already}

%Our contributions are summarized as follows: {\red the below may be repetitive. }
%\begin{itemize}
%\item We propose StreamPhy, a unified framework for streaming inference of high-dimensional spatiotemporal physical fields from sparse and irregular observations, where irregularity refers to arbitrarily distributed sampling locations and time-varying observation patterns.

%\item We design data-friendly observation encoders that map heterogeneous and sparsely sampled inputs into structured representations compatible with sequence modeling, improving robustness under diverse sampling schemes.

%\item We introduce the FT-FiLM module for expressive functional decoding and theoretically show that it generalizes the functional Tucker model by subsuming its function class, while offering greater flexibility for modeling complex dynamics.

%\item We seamlessly integrate the above components into a structured state-space model, enabling efficient and memory-free online inference, and demonstrate state-of-the-art performance across multiple physical systems.
%\end{itemize}

{\bf Notation:} Lower- and upper-case bold letters (e.g., $\mf x$ and $\mf X$) denote vectors and matrices, respectively. Upper-case bold calligraphic letters and upper-case calligraphic letters (e.g., $\bc{X}$ and $\mathcal{X}$) denote tensors and sets, respectively.

\section{Background}
\subsection{Preliminary}
\textbf{Tensor Decomposition:} Consider a $K$-mode tensor $\bc{Y} \in \mathbb{R}^{I_1 \times \cdots \times I_K}$, where each entry $y_{\mf{i}}$ is indexed by $\mf{i}=(i_1,\dots,i_K)$ { where $i_k\in[I_K]$ for all $k$}. The Tucker decomposition~\cite{sidiropoulos2017tensor} factorizes $\bc{Y}$ into mode-wise factor matrices $\{\mf{U}^{k} \in \mathbb{R}^{I_k \times R_k}\}_{k=1}^{K}$ and a core tensor $\bc{G} \in \mathbb{R}^{R_1 \times \cdots \times R_K}$, where $R_k$ denotes the rank of mode $k$. Each entry admits the multilinear form:$	y_{\mf{i}} \approx \text{vec}(\bc{G})^{\T}(\mf{u}_{i_1}^1\otimes \cdots \otimes \mf{u}_{i_K}^K)
= \sum_{r_1=1}^{R_1}\cdots \sum_{r_K=1}^{R_K} w_{r_1,\dots,r_K} \prod_{k=1}^{K} u^k_{i_k,r_k},$ where $\mf{u}_{i_k}^k \in \mathbb{R}^{R_k}$ is the latent representation of index $i_k$ in mode $k$. The CANDECOMP/PARAFAC (CP) decomposition~\cite{HarshmanCP} arises as a special case with equal ranks $R_k$ and a superdiagonal core tensor $\bc{G}$. { Conventional tensors are defined over discrete grids, indexed by ${\bf i}$.}
Functional tensors extend this framework to continuous domains by modeling entries as evaluations of a multivariate function over $\mathbb{R}^K$, rather than discrete grids. Applying the Tucker structure yields the functional Tucker decomposition~\cite{luo2023lowrank,fang2023functional}:
\begin{equation}
	\begin{split}
		y(\mf{i}) \approx
        \text{vec}(\bc{G})^{\T}
		\bigl(\mf{u}^1(i_1)\otimes \cdots \otimes \mf{u}^K(i_K)\bigr),
		\label{eq:func_tucker}
	\end{split}
\end{equation}
where $\mf{i}=(i_1,\dots,i_K)\in \mathbb{R}^{K}$  ({ where each $i_k$ takes continuous values}), and $\mf{u}^k(\cdot): \mathbb{R} \to \mathbb{R}^{R_k}$ denotes the latent function for mode $k$.  Such formulations have proven effective for modeling high-dimensional data over continuous coordinates, including climate, turbulent flow, and geospatial datasets~\cite{luo2023lowrank, fang2023functional, catte2025, OFTD2025}.

\textbf{A  state space model}~\cite{lssl,s4,hippo} (SSM) provides a general mathematical framework for modeling recurrent processes via the evolution of a latent state. We note that term ``state space model'' has been used broadly in prior work~\cite{kalman, puterman1990markov, eddy1996hidden}; here, we use ``SSM'' exclusively to denote a class of  structured SSMs.

 In particular, we consider a continuous-time dynamical system that maps a one-dimensional input \( s(t) \in \mathbb{R} \) to an output \( \eta(t) \in \mathbb{R} \) through a $L$-dimensional latent state \( \mf{x}(t) \in \mathbb{R}^L \), formulated as
\begin{equation}
	\begin{split}
		\dot{\mf x}(t) &= \mathbf{A} \mf x (t) + \mathbf{b} s(t), \\
		\eta(t) &= \mathbf{c}^{\T} \mf x(t) + d \cdot s(t),
	\end{split}
	\label{eq:func_ssm}
\end{equation}
where \(\mathbf{A} \in \mb{R}^{L \times L}\) governs the state transition dynamics, and \(\mathbf{b} \in \mb{R}^{L}, \mathbf{c} \in \mb{R}^{L}, d \in \mb{R}\) parameterize the input-to-state and state-to-output mappings.
This formulation  captures temporal dependencies through latent state evolution in a continuous-time dynamical system, which can be used as a black-box representation in a deep learning model. 

Prior works~\cite{hippo,lssl,s4} leverage HiPPO theory to enable online compression of continuous signals and discrete-time sequences via projections onto orthogonal polynomial bases, thereby significantly enhancing the modeling of long-range dependencies in Eq.~\ref{eq:func_ssm}. Specifically, this framework prescribes a structured class of matrices \(\mathbf{A}\) and \(\mathbf{b}\), allowing the latent state \(\mathbf{x}(t)\) to effectively retain the history of the input \(s(t)\) in an online manner. A prominent instance, HiPPO-LegS, is defined as:

\begin{equation}
	\mf{A}_{lk} =
	-\begin{cases}
		\sqrt{(2l+1)(2k+1)} & \text{if } l > k, \\
		l+1 & \text{if } l = k, \\
		0 & \text{if } l < k.
	\end{cases}; \quad  \mf{b}_{l} = \sqrt{2l+1}.
	\label{eq:hippo}
\end{equation}

To accommodate discrete input sequences, the continuous-time SSM can be discretized into a recurrence form using the bilinear transform~\cite{btmethod}. Given a step size $\Delta t$, the resulting discrete-time system is
\begin{align}
 	\mf x_{t} &= \bar{\mathbf{A}} \mf x_{t-1} + \bar{\mathbf{b}} s_t, \label{eq:ssm1} \\
	\eta_{t} &= \mathbf{c}^{\T} \mf x_t + d \cdot s_t, \label{eq:ssm2}
\end{align}
where 
\(\bar{\mathbf{A}} = \left(\mathbf{I} - \frac{\Delta t}{2}\mathbf{A}\right)^{-1}
\left(\mathbf{I} + \frac{\Delta t}{2}\mathbf{A}\right)\), 
\(\bar{\mathbf{b}} = \left(\mathbf{I} - \frac{\Delta t}{2}\mathbf{A}\right)^{-1}\Delta t \mathbf{b}\) and $\mf{I}$ is the identity matrix. This discretization naturally enables the model to handle irregularly sampled time intervals.

\subsection{Problem Setting and Existing Methods}

\textbf{Problem Statement:} 
We denote the $K$-mode physical full-field tensor at time step $t$ by $\bc{Y}_t$, where $\mathbf{i}=(i_1,\cdots,i_K)$ denotes a  continuous spatial coordinate. 
Without loss of generality, we consider a sequence of $M$ observation time steps $\mathcal{T}=\{t_1,\dots,t_M\}$ with possibly nonuniform intervals, reflecting realistic asynchronous sampling. 
At each time $t_m$, we observe a set
$\mathcal{O}_{t_m}=\{(\mathbf{i}_{n_m}, y_{t_m,\mathbf{i}_{n_m}})\}_{n_m=1}^{N_m}$,
where $y_{t_m,\mathbf{i}_{n_m}}$ is the entry of $\bc{Y}_{t_m}$ at location $\mathbf{i}_{n_m}$ and $N_m$ may vary over time. 
The observation pattern is therefore time-varying, and we collect the entire data stream as $\mathcal{O}=\{\mathcal{O}_{t_m}\}_{m=1}^M$. 
Unlike prior settings,  e.g., those in~\cite{sdift2025, catte2025},  our streaming setting assumes that observations $\mathcal{O}_{t_m}$ arrive sequentially, requiring online updates based solely on current inputs. The objective is to perform streaming inference of the underlying continuous field $y_t(\mathbf{i}): \mathbb{R}^K \rightarrow \mathbb{R}$ at arbitrary spatial coordinates from this irregular, time-ordered observation stream, which can be viewed as an online function approximation problem.

\textbf{Existing Methods:}
Latent diffusion-based methods~\cite{du2024conditional, sdift2025, li2024learning_nmi2} encode observations into a latent space and learn a generative model, where reconstruction is guided by posterior sampling. However, such a paradigm typically relies on offline training and iterative sampling at inference, limiting its applicability in streaming settings. Within this framework, the functional Tucker model (FTM, see Eq.~\eqref{eq:func_tucker})  serves as an effective latent encoder for high-dimensional data \cite{sdift2025}. Nevertheless, FTM relies on multilinear interactions between core tensors and basis functions, often requiring large tensor ranks to achieve sufficient expressivity, which leads to increased memory overhead and slower latent-space operations.

In addition, existing functional tensor-based methods~\cite{luo2023lowrank, OFTD2025, catte2025, fang2023functional} leverage the multilinear structures of FTM for spatiotemporal reconstruction. Despite their effectiveness, these approaches are typically trained in an unsupervised manner on a single evolving tensor sequence, which limits their performance under extreme observations. Moreover, they often rely on high tensor ranks to capture complex dynamics and require memory replay for online updates, thereby reducing practicability.  Further discussion of the related work is provided in Appx.~\ref{app:related_work}.

\subsection{SSM-driven Framework: Motivation and Challenges}
Existing paradigms are fundamentally constrained by limited computational efficiency and insufficient long-range memory, thereby limiting their ability to effectively capture long-horizon temporal dependencies. This shortcoming motivates exploring  SSMs~\cite{lssl, hippo, s4}, which offer a principled and memory-efficient framework for sequential representation learning. In particular, the HiPPO framework~\cite{hippo} provides a rigorous mechanism for projecting historical information onto a set of orthogonal basis functions, enabling compact yet memory-preserving latent state representations. Such properties make SSMs especially well-suited for the online approximation of continuous-time dynamical systems.

Despite these advantages, online function approximation from sparse and sequential observations remains inherently challenging. The difficulty stems from the curse of dimensionality, the streaming and partially observed nature of the data, and the intrinsic complexity of spatiotemporal physical processes. Extending conventional SSM formulations to irregular spatiotemporal settings, which are characterized by non-uniform temporal sampling and incomplete spatial coverage, introduces additional challenges. Specifically, two critical issues must be addressed: (1) how to effectively encode irregular observations to consistently update latent states defined over a regular domain; and (2) how to construct a sufficiently expressive representer capable of accurately reconstructing full-field outputs from compact latent states.

To tackle these challenges, we propose \textbf{StreamPhy}, a deep state space framework that seamlessly integrates a tailored attention-based observation encoder, a HiPPO-based structured SSM, and a functional tensor-based feature-wise linear modulation module. Through this synergistic design, the proposed framework enables accurate and scalable modeling of high-dimensional continuous physical dynamics from sparse and irregular observations. 
%A schematic overview of the framework is presented in Fig.~\ref{fig:flowchart}.
\section{Methodology}
\subsection{Observation Encoder for Arbitrary Patterns}
As in Eq.~(\ref{eq:ssm1}), prior SSMs~\cite{hippo,lssl,s4} mainly target multivariate time series with regularly sampled observations, used either directly as $s_t$ or encoded via RNNs or CNNs. In contrast, at each time step $t_m$, { our} observation is a set $\mathcal{O}_{t_m}$ with varying cardinality, which cannot be directly handled by existing SSM models. To address this limitation, we aim to develop a novel observation encoder that maps each observation set into a structured latent representation compatible with the SSM framework.  Motivated by the ability of attention to naturally handle variable-length inputs, we design an attention-based observation encoder to effectively aggregate irregular observations sampled from arbitrary patterns, as illustrated in Fig.~\ref{fig:ob_en}.
 \begin{wrapfigure}{r}{0.31\textwidth}     
	\centering
	\includegraphics[width=\linewidth]{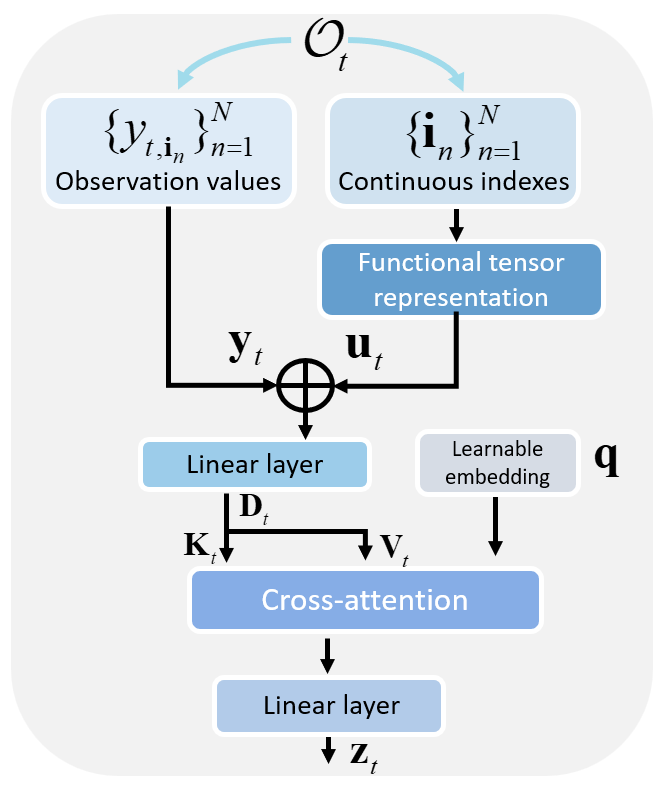}
	\caption{Details of the proposed single-head observation encoder module.}
	\label{fig:ob_en}
    \vspace{-4mm}
\end{wrapfigure}

Consider a time step $t$ (we omit the subscript $m$ for simplicity), where the observation is given by $\mathcal{O}_t$. We first decompose $\mathcal{O}_t$ into an observation value set $\{y_{t,\mathbf{i}_n}\}_{n=1}^{N}$ and a continuous index set $\{\mathbf{i}_n\}_{n=1}^{N}$. Each index $\mathbf{i}_n$ is then mapped to a functional tensor representation $\mf{u}_{n} \in \mathbb{R}^{\sum_{k=1}^{K}R_k}$ defined as
\begin{equation}
	\mf{u}_{n} = \text{Concat}\big[\mf{u}^{1}_{\boldsymbol{\theta}_1}(i_1),\cdots,\mf{u}^{k}_{\boldsymbol{\theta}_k}(i_k),\cdots, \mf{u}^{K}_{\boldsymbol{\theta}_K}(i_K)\big],
	\label{eq:pe}
\end{equation}
where $\mf{u}^{k}_{\boldsymbol{\theta}_k}(i_k): \mathbb{R}_{+} \to \mathbb{R}^{R_k}$ is a learnable function parameterized by $\boldsymbol{\theta}_k$ that maps the continuous coordinate $i_k$ to its latent embedding. Specifically, we adopt the idea of INR~\cite{sitzmann2020implicit}: $\mf{u}^{k}_{\boldsymbol{\theta}_k}(i_k) = \text{MLP}\big([\cos(2\pi\boldsymbol{\phi}_k i_k), \sin(2\pi\boldsymbol{\phi}_k i_k)]\big)$, where $\boldsymbol{\phi}_k$ is a learnable frequency vector that modulates the input coordinate.
Collecting all functional tensor representations and their corresponding observations, we obtain $\mf{y}_t \in \mathbb{R}^{N \times 1}$ and $\mf{U}_t \in \mathbb{R}^{N \times KR}$. We concatenate them and project the result into a latent representation $\mf{D}_t \in \mathbb{R}^{N \times D}$ via a linear transformation.

Next, we aim to aggregate the $N$ latent samples into a structured representation that is independent of $N$. A straightforward approach is to compute a weighted average over $\mf{D}_t$ along the sample dimension $N$. However, such a strategy treats all observation points uniformly and thus fails to capture informative variations across samples. To address this limitation, we introduce a learnable query embedding $\mf{q} \in \mathbb{R}^{D}$ and employ a cross-attention mechanism over $\mf{D}_t$ to adaptively select more informative observations~\cite{attention}:
\begin{equation}
\text{CrossAttention}(\mf q,\mf K_t,\mf V_t) = \text{Softmax}\left(\frac{\mf q^{\T} \mf K_t^{\T}}{\sqrt{D}}\right)\mf{V}_t \in \mathbb{R}^{1 \times D},
\end{equation}
where  	$\mf{K}_t = \mf{D}_t\mf{W}_{K} \in \mathbb{R}^{N\times D},  \mf{V}_t = \mf{D}_t\mf{W}_{V} \in \mathbb{R}^{N\times D}$ and    $\mf{W}_{K}, \mf{W}_{V} \in \mathbb{R}^{D \times D}$ are learnable projection matrices.
Finally, the aggregated representation is passed through an MLP to produce the output of the observation encoder:
\begin{equation}
	\mf{z}_t = \text{MLP}\big(\text{CrossAttention}(\mf q,\mf K_t,\mf V_t)\big) \in \mathbb{R}^{P}.
	\label{eq:cross_att}
\end{equation}
Note that  $\mf{q}$ is shared across all time steps, enabling the encoder to produce temporally consistent and structurally stable representations.

In { practice}, observations are typically sampled in non-uniform patterns, where both the number and spatial distribution of samples vary over time. To mimic this setting, we adopt a stochastic masking strategy that randomly removes a subset of observations during training, thereby encouraging the model to learn robust  representations.

We further extend the above formulation to a multi-head version to capture diverse dependencies with $H$ heads. For each head $h$, we compute $\mf{q}^{(h)} = \mf{q}[(h-1)d_h:hd_h], \quad
\mf{K}_t^{(h)} = \mf{D}_t\mf{W}_K^{(h)}, \quad
\mf{V}_t^{(h)} = \mf{D}_t\mf{W}_V^{(h)}$ and have $\text{head}_h = \text{Softmax}\left( \frac{ (\mf{q}^{(h)})^{\T} (\mf{K}_t^{(h)})^{\T}}{\sqrt{d_h}}\right)\mf{V}_t^{(h)}$, 
where $d_h = D/H$. The outputs of all heads are concatenated and projected:$	\text{MultiHead}(\mf{q}, \mf{K}_t, \mf{V}_t)
= \text{Concat}(\text{head}_1,\dots,\text{head}_H)\mf{W}_O.$ Finally, we have $\mf{z}_t = \text{MLP}\big(\text{MultiHead}(\mf{q}, \mf{K}_t, \mf{V}_t)\big)$.

With the proposed observation encoder, we extend the scalar input $s_t$ in Eq.~\eqref{eq:ssm2} to a $P$-dimensional representation $\mf{z}_t \in \mathbb{R}^{P}$, yielding the following new transition function:
\begin{equation}
	\mf{X}_{t} = \bar{\mathbf{A}} \mf{X}_{t-1} + \bar{\mathbf{b}} \circ \mf{z}_t,
	\label{eq:latent_state}
\end{equation}
where $\mf{X}_{t} \in \mathbb{R}^{L \times P}$ denotes the augmented state that tracks the evolution of $\mf{z}_t$ and $\circ$ denotes the outer product, producing a $L \times P$  injection term.

\subsection{Functional Tensor Feature-wise Linear Modulation as an Expressive Representer}
\begin{figure*}[t]
	\centering
	\begin{minipage}{0.5\textwidth}
		\centering
		\includegraphics[width=0.85\textwidth]{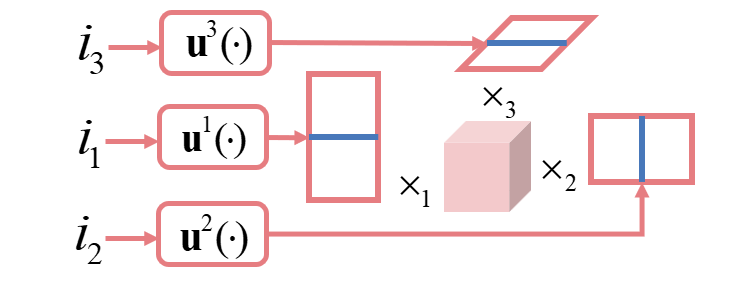}
		\caption*{(a) Functional Tucker model~\cite{luo2023lowrank, sdift2025}.  }
	\end{minipage}   
	\hspace{-0.3cm} % 调整两张图片之间的间距
	\begin{minipage}{0.5\textwidth}
		\centering
		\includegraphics[width=0.8\textwidth]{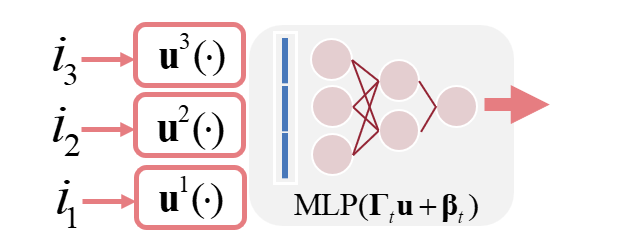}
		\caption*{(b) Functional tensor FiLM. }
	\end{minipage}    
	\caption{Illustrative comparison of FTM and FT-FiLM ($K$=3).}
	\label{fig:FTM-FTFILM}
	\vspace{-0.07in}
\end{figure*}

As shown in Eq.~(\ref{eq:ssm2}), existing SSMs typically decode signals via a linear combination of latent states and observations. However, such formulations suffer from limited expressive power and are not well-suited for high-dimensional full-field generation.

When performing reconstruction, we aim to enable querying the field value at any coordinate in the continuous domain. Formally, we seek a reconstruction function of the form
\begin{equation}
    y_t(\mathbf{i}_n) = f(\mathbf{X}_t, \mathbf{z}_t \mid \mathbf{u}_n),
\end{equation}
where $\mathbf{u}_n$ denotes the functional representation associated with coordinate $\mathbf{i}_n$, shown in Eq.~\eqref{eq:pe}. Such a formulation decouples spatial representation from temporal dynamics, allowing flexible conditioning on both the latent state and spatial features. This is particularly desirable for high-dimensional field generation, where expressive interactions between $\mathbf{X}_t$, $\mathbf{z}_t$, and $\mathbf{u}_n$ are essential.

{ We propose a Functional Tensor FiLM (FT-FiLM) module. FT-FiLM is an extension of Feature-wise Linear Modulation (FiLM)~\cite{film} into the functional tensor domain. FiLM is known for its effectiveness in conditional reasoning and generative modeling}. The key idea is to incorporate feature-wise affine modulation into a functional tensor representation, where both the scale and shift parameters are explicitly conditioned on the latent state, enabling flexible interaction between dynamics and spatial representations. Specifically, we first generate modulation parameters conditioned on both latent state $\mathbf{X}_t$ and observation latent $\mathbf{z}_t$:
\begin{equation}
    \boldsymbol{\Gamma}_t = f_{\boldsymbol{\omega}_1}(\mathbf{X}_t, \mathbf{z}_t), 
    \quad 
    \boldsymbol{\beta}_t = f_{\boldsymbol{\omega}_2}(\mathbf{X}_t, \mathbf{z}_t),
    \label{eq:gamma_beta}
\end{equation}
where $f_{\boldsymbol{\omega}_1}$ and $f_{\boldsymbol{\omega}_2}$ are learnable mappings. The detailed computation of these functions is provided in Appx.~\ref{app:detail}. Here, $\boldsymbol{\Gamma}_t \in \mb{R}^{ V \times \sum_{k=1}^{K}R_k }$ and $\boldsymbol{\beta}_t \in \mb{R}^{V}$ denote the feature-wise scaling and shifting parameters, respectively, which are used to modulate the latent representations in a feature-wise manner.

Given  $\mathbf{u}_n$ indexed at $\mathbf{i}_n$, the field value is then computed as 
\begin{equation}
    y_t(\mathbf{i}_n) = f_{\boldsymbol{\varphi}}\!\left( \boldsymbol{\Gamma}_t \mathbf{u}_n + \boldsymbol{\beta}_t \right),
    \label{eq:read_out}
\end{equation}
where $f_{\boldsymbol{\varphi}}: \mathbb{R}^V \rightarrow \mathbb{R}$ is a learnable readout function.

This formulation enables a more flexible and expressive coupling between latent dynamics and high-dimensional field representations, yielding improved capacity over the functional Tucker model (FTM)~\cite{luo2023lowrank, sdift2025}. Illustrative comparisons for $K=3$ are provided in Fig.~\ref{fig:FTM-FTFILM}. 
From a theoretical perspective, Theorem~\ref{thm:ftfilm} establishes that FT-FiLM is  more expressive than FTM in the sense of uniform approximation, with its closure coinciding with the entire space \(C(\Omega)\) (see proof in Appx.~\ref{app:sec:proof}). This stronger expressivity provides a more flexible basis for modeling complex dynamics.

\begin{theorem}[Expressivity of FT-FiLM]
\label{thm:ftfilm}
Let $K \geq 2$, and let $\Omega_k \subset \mathbb{R}$ be compact intervals
with nonempty interior. Set $\Omega = \Omega_1 \times \cdots \times \Omega_K$.
Fix ranks $\mathbf{R} = (R_1, \ldots, R_K)$ with $R_k \geq 1$, and let
$S = \sum_{k=1}^{K} R_k$. Assume the FT-FiLM width satisfies $V \geq K$.
Define
\begin{align*}
&\mathcal{F}_{\mathrm{FTM}}(\mathbf{R})
= \Big\{\,\mathbf{i}\mapsto
\text{vec}(\bc{G})^{\T}\bigl(\mf{u}^1(i_1)\otimes \cdots \otimes \mf{u}^K(i_K)\bigr)
\;\Big|\;\boldsymbol{\mathcal{G}}\in\mathbb{R}^{R_1\times\cdots\times R_K},\;
\mathbf{u}^{k}\in C(\Omega_k,\mathbb{R}^{R_k})\,\Big\},\\[4pt]
&\mathcal{F}_{\mathrm{FT\text{-}FiLM}}(\mathbf{R}, V)
= \Big\{\,\mathbf{i}\mapsto f_{\boldsymbol{\varphi}}\!\big(
\mathbf{W}_g\, \mathbf{u}_{\mathbf{i}} + \mathbf{w}_b\big)
\;\Big|\;
\mathbf{u}_{\mathbf{i}}=\big[\mathbf{u}^{1}(i_1);\ldots;\mathbf{u}^{K}(i_K)\big]\in\mathbb{R}^S,\\
&\qquad\quad \qquad\quad \qquad\quad \quad \qquad\quad \qquad
\mathbf{W}_g\in\mathbb{R}^{V\times S},\;
\mathbf{w}_b\in\mathbb{R}^V,\;
\mathbf{u}^{k}\in C(\Omega_k,\mathbb{R}^{R_k}),\;
f_{\boldsymbol{\varphi}}\in\mathcal{N}\,\Big\},
\end{align*}
where $\mathcal{N}$ denotes the set of feedforward neural networks
$f_{\boldsymbol{\varphi}}:\mathbb{R}^V\to\mathbb{R}$ satisfy the universal approximation theorem~\cite{UAT1}. Let $\overline{(\,\cdot\,)}$ denote closure
under the uniform norm on $\Omega$. Then
\begin{equation}
\overline{\mathcal{F}_{\mathrm{FTM}}(\mathbf{R})}
\;\subsetneq\;
\overline{\mathcal{F}_{\mathrm{FT\text{-}FiLM}}(\mathbf{R}, V)}
\;=\; C(\Omega).
\end{equation}
\end{theorem}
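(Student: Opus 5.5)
The proof splits into two claims: the FT-FiLM class is dense in $C(\Omega)$, and the closure of the FTM class is a proper subset of $C(\Omega)$. The inclusion $\overline{\mathcal{F}_{\mathrm{FTM}}(\mathbf{R})}\subseteq C(\Omega)=\overline{\mathcal{F}_{\mathrm{FT\text{-}FiLM}}(\mathbf{R},V)}$ then holds trivially, and strictness comes from the second claim.

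\emph{Density of FT-FiLM.} Fix $g\in C(\Omega)$ and $\varepsilon>0$.
\begin{itemize}
\item Choose the latent functions so that the first coordinate of $\mathbf{u}^k(i_k)$ equals $i_k$ and the remaining coordinates vanish. These are continuous, and this is possible because $R_k\ge 1$.
\item Since $V\ge K$, pick $\mathbf{W}_g\in\mathbb{R}^{V\times S}$ that routes these $K$ coordinates into the first $K$ entries, and set $\mathbf{w}_b=\mathbf{0}$. Then $\mathbf{W}_g\mathbf{u}_{\mathbf{i}}=(i_1,\ldots,i_K,0,\ldots,0)$.
\item Define $\tilde g(\mathbf{x})=g(x_1,\ldots,x_K)$ on the compact box $\Omega\times[-1,1]^{V-K}\subset\mathbb{R}^V$; it is continuous there.
\item By the universal approximation property assumed for $\mathcal{N}$, there is $f_{\boldsymbol{\varphi}}\in\mathcal{N}$ with $\sup|f_{\boldsymbol{\varphi}}-\tilde g|<\varepsilon$ on this box.
\end{itemize}
The resulting FT-FiLM function is within $\varepsilon$ of $g$ uniformly on $\Omega$, so the FT-FiLM closure is all of $C(\Omega)$.

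\emph{FTM closure is proper.} Write $\mathbf{i}=(x,\mathbf{y})$ with $x=i_1\in\Omega_1$ and $\mathbf{y}=(i_2,\ldots,i_K)\in\Omega'=\Omega_2\times\cdots\times\Omega_K$. Expanding the Tucker form along mode $1$ gives
\[
f(x,\mathbf{y})=\sum_{r_1=1}^{R_1}u^1_{r_1}(x)\,h_{r_1}(\mathbf{y}),
\]
so every FTM function has separation rank at most $R_1$ across the split $(x,\mathbf{y})$. Equivalently, for every choice of points $x_0,\ldots,x_{R_1}\in\Omega_1$ and $\mathbf{y}_0,\ldots,\mathbf{y}_{R_1}\in\Omega'$,
\[
\det\big[f(x_a,\mathbf{y}_b)\big]_{a,b=0}^{R_1}=0 .
\]
Each such determinant is a polynomial in finitely many point evaluations of $f$. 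It is therefore continuous under uniform (even pointwise) convergence, so the vanishing conditions pass to the closure. Hence every $f\in\overline{\mathcal{F}_{\mathrm{FTM}}(\mathbf{R})}$ also satisfies all these determinant identities.

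It remains to exhibit a continuous function violating one of them. Take
\[
g(\mathbf{i})=\exp(i_1 i_2),
\]
which uses $K\ge 2$. Choose distinct $x_0<\cdots<x_{R_1}$ in $\Omega_1$ and distinct $y_0<\cdots<y_{R_1}$ in $\Omega_2$, which exist because the intervals have nonempty interior. Fix the other coordinates arbitrarily. The matrix $[\exp(x_ay_b)]$ is a generalized Vandermonde matrix of the totally positive kernel $e^{xy}$, hence nonsingular. Its nonsingularity also follows from the standard fact that $\sum_a c_a e^{x_a y}$ has at most $R_1$ real zeros unless all $c_a=0$, proved by induction with Rolle's theorem. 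Thus $g\notin\overline{\mathcal{F}_{\mathrm{FTM}}(\mathbf{R})}$, which gives strictness.

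The main obstacle is the closedness step: one must phrase the rank bound through finite determinant conditions rather than through the factor functions, since the factors $u^1_{r_1}$ and $h_{r_1}$ need not converge along an approximating sequence. The determinant formulation avoids this entirely, and the nonsingularity of the exponential kernel matrix is the only genuinely analytic input.
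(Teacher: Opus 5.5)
Your proof is correct and follows the same route as the paper's: the same coordinate-routing construction $\mathbf{u}^k(i_k)=(i_k,0,\ldots,0)^{\top}$ with universal approximation gives density, and the same witness $e^{i_1 i_2}$ with a determinant-continuity argument gives strictness. Your version is somewhat sharper in three places: you use the mode-1 separation rank $R_1$ instead of the paper's cruder bound $R_1R_2$, you treat general $K$ explicitly rather than assuming ``without loss of generality $K=2$,'' and you justify $\det[e^{x_a y_b}]\neq 0$ via Rolle/total positivity where the paper only cites a reference; the paper, for its part, spells out the easy inclusion $\overline{\mathcal{F}_{\mathrm{FT\text{-}FiLM}}(\mathbf{R},V)}\subseteq C(\Omega)$ (members are continuous and $C(\Omega)$ is closed under uniform limits), which you leave implicit.
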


On the whole, by integrating the proposed observation encoder and FT-FiLM module into the SSM, we obtain our StreamPhy framework, which extends the formulations in Eqs.~(\ref{eq:ssm1})--(\ref{eq:ssm2}), given by:
\begin{mdframed}[backgroundcolor=gray!10,linewidth=0.8pt]
\begin{equation}
	\begin{split}
		\mf{X}_{t} &= \bar{\mathbf{A}} \mf{X}_{t-1} + \bar{\mathbf{b}} \circ \mf{z}_t, \\
		y_t(\mathbf{i}_n) &= f_{\boldsymbol{\varphi}} \!\left(\boldsymbol{\Gamma}_t \mathbf{u}_n + \boldsymbol{\beta}_t \right).
	\end{split}
\end{equation}
\end{mdframed}
 We summarize the overall training procedure in Algorithm~\ref{alg:1} (see Appx.~\ref{algorithm}).

\section{Experiment}
\vspace{-1mm}

\begin{figure}[h]
	\centering
	\includegraphics[width=0.95\linewidth]{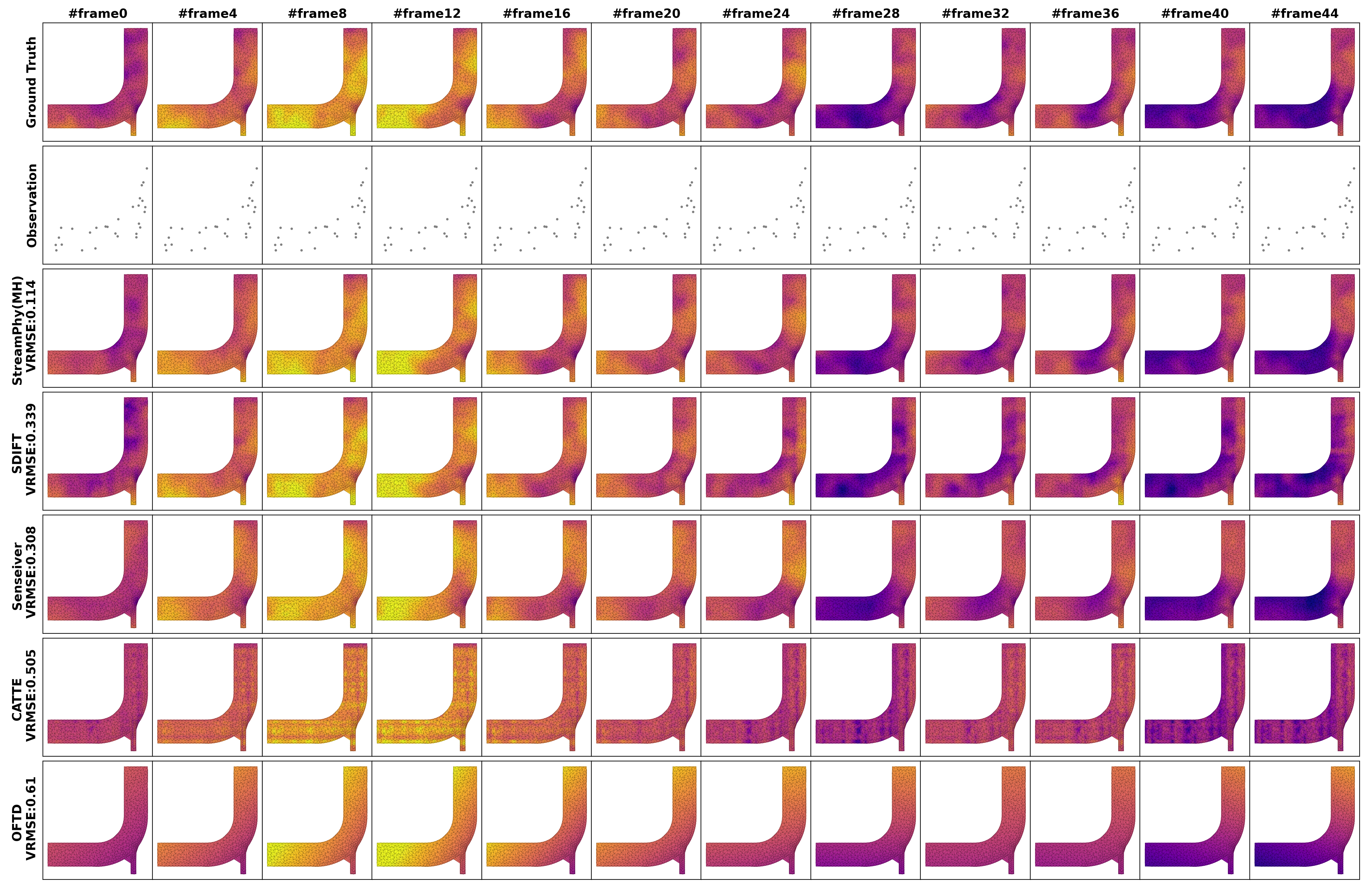}
	\caption{Reconstruction of \textit{Turbulent Flow} dynamics under uniform sampling pattern with \(\rho = 3\%\).}
	\label{fig:tf_plot1}
	\vspace{-0.2in}
\end{figure}

\begin{table*}[h]
\centering
\setlength{\tabcolsep}{0.8 pt}
\renewcommand{\arraystretch}{0.9}
\begin{scriptsize}
\begin{tabular}{l|cc|cc|cc||cc|cc|cc}
\toprule
& \multicolumn{6}{c||}{\textbf{Uniform Sampling}} 
& \multicolumn{6}{c}{\textbf{Slab Sampling}} \\

\cmidrule(r){2-7} \cmidrule(l){8-13}

& \multicolumn{2}{c}{\textit{Turbulent Flow}} 
& \multicolumn{2}{c}{\textit{Ocean Sound Speed}} 
& \multicolumn{2}{c||}{\textit{Active Matter}}

& \multicolumn{2}{c}{\textit{Turbulent Flow}} 
& \multicolumn{2}{c}{\textit{Ocean Sound Speed}} 
& \multicolumn{2}{c}{\textit{Active Matter}} \\

& $\rho=3\%$ & $\rho=5\%$ 
& $\rho=1\%$ & $\rho=3\%$ 
& $\rho=1\%$ & $\rho=3\%$

& $\rho=3\%$ & $\rho=5\%$ 
& $\rho=1\%$ & $\rho=3\%$ 
& $\rho=1\%$ & $\rho=3\%$ \\
\midrule

\multicolumn{13}{l}{\textit{Tensor-based}}\\
 LRTFR~\cite{luo2023lowrank} & 0.5633 & 0.3505 & 0.3453 & 0.2176 & 0.3021 & 0.2582 
       & 0.6517 & 0.4897 & 0.4075 & 0.3757 & 0.9988 & 0.9537 \\
OFTD~\cite{OFTD2025}  & 0.5416 & 0.4810 & 0.2571 & 0.1470 & 0.5542 & 0.3289
       & 0.5034 & 0.4901 & 0.5348 & 0.4847 & 0.7387 & 0.6721 \\
CATTE~\cite{catte2025} & 0.6192 & 0.5916   & 0.0941 & 0.0850 & 0.1581 & 0.1567
      & 0.6563 & 0.6257  & 0.1225 & 0.1187 & 0.1960 & 0.1920  \\
\midrule

\multicolumn{13}{l}{\textit{Attention-based}}\\
Senseiver~\cite{santos2023developmentNMI} & 0.2779 & 0.2361 & 0.1553 & 0.1248 & 0.2555 & 0.2020 
& 0.3052 & 0.2578 & 0.2008 & 0.1768 & 0.2762 & 0.2430 \\
\midrule

\multicolumn{13}{l}{\textit{Diffusion-based}}\\
SDIFT~\cite{sdift2025} & 0.3248 & 0.2821 & 0.1466 & 0.1084 & 0.2158 & 0.1564 
      & 0.4070 & 0.4029 & 0.2713 & 0.1452 & 0.4231 & 0.3985 \\
\midrule

\multicolumn{13}{l}{\textit{SSM-based}}\\
StreamPhy (SH) 
& 0.1283 & 0.0954 & 0.0862 &  0.0760 & 0.0895 & 0.0833 
& 0.2094 & 0.1979 & 0.1324 & 0.1045 & 0.1452 & 0.1278 \\

StreamPhy (MH) 
&  \textbf{0.0935} & \textbf{0.0696} &\textbf{0.0628}  & \textbf{0.0562} & \textbf{0.0752} & \textbf{0.0732} 
& \textbf{0.1067} & \textbf{0.0978} & \textbf{0.0692} & \textbf{0.0634} & \textbf{0.0905} & \textbf{0.0866} \\

\bottomrule
\end{tabular}
\end{scriptsize}

\caption{\small Average VRMSEs under two sampling patterns across datasets and observation ratios.}
\label{table:t1}
\vspace{-0.1in}
\end{table*}

\textbf{Datasets:}
We evaluate StreamPhy on three  physical datasets. 
(1) \textit{Turbulent Flow}(\url{https://zenodo.org/records/14037782}) , which captures the spatiotemporal evolution of turbulent fluid velocity fields with complex multiscale vortex dynamics and chaotic structures. We extract 600 records of size $48 \times 918 \times 1$, using 500 for training (each consisting of 48 frames with 918 irregular spatial observations) and 100 for testing.
(2) \textit{Ocean Sound Speed}(\url{https://ncss.hycom.org/thredds/ncss/grid/GLBy0.08/expt_93.0/ts3z/dataset.html}) , which contains sound speed field measurements in the Pacific Ocean. We extract 1000 records of size $24 \times 5 \times 38 \times 76$ (24 frames), using 950 for training and 50 for testing; during training, only 10\% of spatial points are used to simulate irregular observations.
(3) \textit{Active Matter}(\url{https://polymathic-ai.org/the_well/datasets/active_matter/})
, which models the dynamics of rod-like active particles in a Stokes fluid via continuum simulations. We extract 928 records of size $24 \times 256 \times 256$ (24 frames), using 900 for training and 28 for testing; similarly, 10\% of spatial points are used during training.

\textbf{Baselines and Settings:}
We conduct  comprehensive evaluations of StreamPhy against state-of-the-art approaches for physical field reconstruction and tensor-based methods.
(1) \textbf{SDIFT}~\citep{sdift2025}, a diffusion framework in functional Tucker space that reconstructs full-field multidimensional physical dynamics from irregular sparse observations by modeling latent functional representations and their temporal evolution; 
(2) \textbf{Senseiver}~\citep{santos2023developmentNMI}, an attention-based method that embeds sparse sensor measurements into a unified latent space for efficient multidimensional field reconstruction; 
(3) \textbf{LRTFR}~\citep{luo2023lowrank}, a low-rank functional Tucker model leveraging factorized neural representations for tensor decomposition; 
(4) \textbf{OFTD}~\citep{OFTD2025}, an online tensor decomposition approach incorporating implicit neural representations into CP model to continuously model streaming data while updating without forgetting past information; 
(5) \textbf{CATTE}~\citep{catte2025}, a functional temporal tensor decomposition framework that encodes continuous indices with Fourier features and neural ODEs, while automatically adapting model complexity through sparsity-inducing priors.  We evaluate all methods on the task of temporal physical field reconstruction under two sampling patterns, with observation ratios $\rho \in \{1\%, 3\%, 5\%\}$. For the uniform sampling pattern, observations are evenly distributed across the entire spatiotemporal domain. At each time step $t_i$, a constant sampling ratio $\rho$ is maintained, and observations are acquired sequentially in temporal order.  For the slab sampling pattern, we consider a non-uniform observation scheme~\cite{bttd}\footnote{Slab sampling refers to observing entire tensor slices at once, as in Unmanned Aerial Vehicle (UAV) sensing, where a drone captures full field slices over a trajectory or time window rather than pointwise measurements.}, while preserving the same sampling ratio $\rho$ at each time step $t_i$.
Following~\cite{sdift2025}, we evaluate performance using the variance-scaled root mean squared error (VRMSE; definition refers to Appx.~\ref{app:vrmse}), a scale-invariant measure of reconstruction accuracy. Each experiment is repeated 10 times, and we report the mean test error. All the implementation details of StreamPhy and other baselines are provided in Appx.~\ref{app:implementation_detail}.
\begin{figure}[t]
	\centering
	\includegraphics[width=0.95\linewidth]{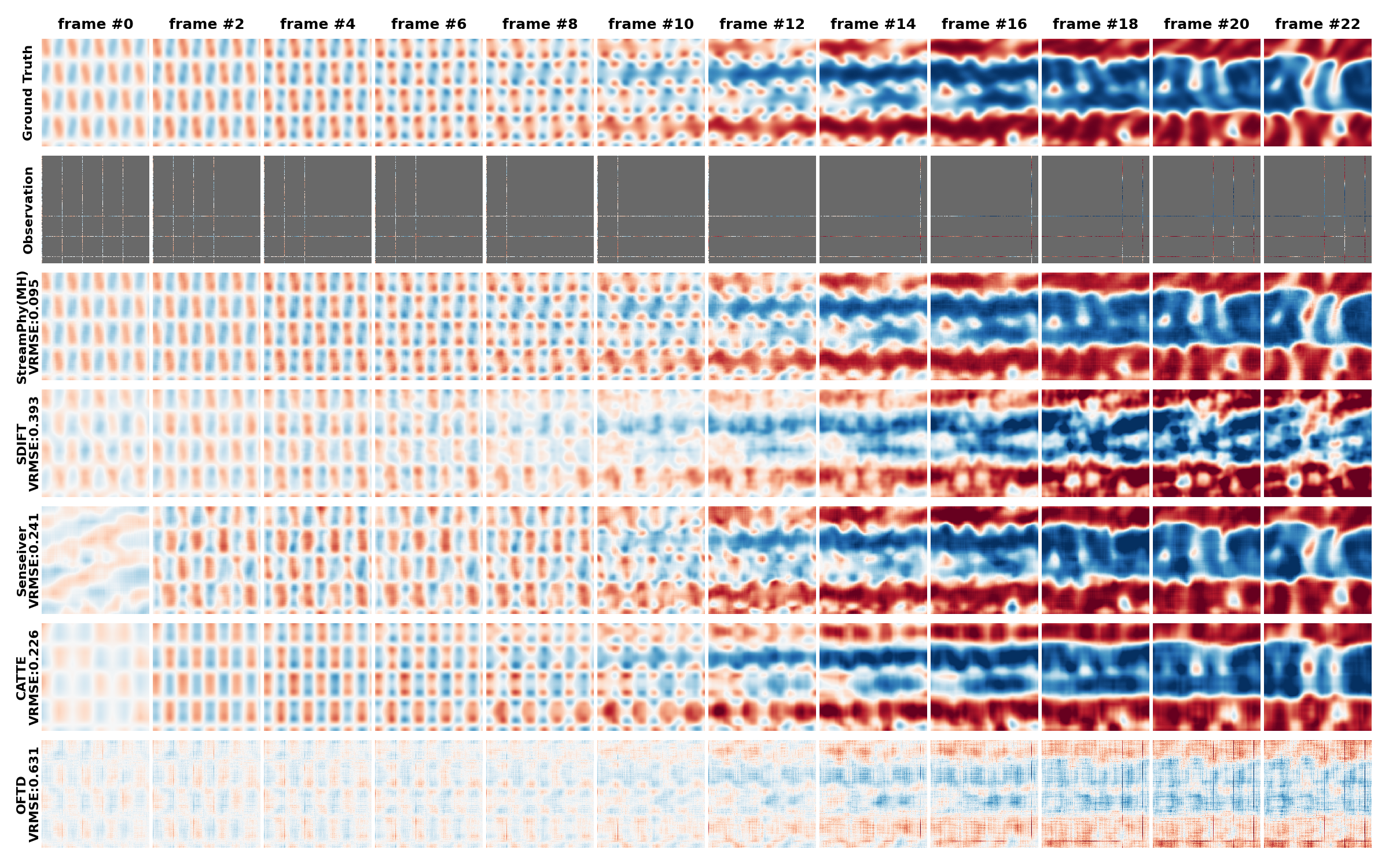}
	\caption{Reconstruction of \textit{Active Matter} dynamics under slab sampling with \(\rho = 3\%\).}
	\label{fig:am_plot2}
    \vspace{-2mm}
\end{figure}

\textbf{Main Evaluation Results:} Note that only OFTD and our StreamPhy support online inference, whereas all other methods require the full temporal sequence before inference. We evaluate two StreamPhy variants: (i) StreamPhy with a single-head observation encoder and the masking strategy in Eq.~\ref{eq:cross_att} (StreamPhy(SH)), (ii) StreamPhy with a multi-head observation encoder and the masking strategy (StreamPhy(MH)). As reported in Tab.~\ref{table:t1}, two variants of StreamPhy  outperform all baselines under both sampling patterns. StreamPhy with a multi-head observation encoder outperforms its single-head counterpart by  effectively captures diverse dependencies across various subspaces.  Qualitative results in Fig.~\ref{fig:tf_plot1} further show that StreamPhy produces more accurate and visually  reconstructions of turbulent flow in an online manner. At the same observation rate, slab sampling is more challenging, and all baselines exhibit a noticeable performance drop (especially SDIFT). This is because it yields spatially concentrated and less informative measurements, which leads to poor coverage, ill-conditioned reconstruction, and stronger dependence on long-range dependencies. As shown in Fig.~\ref{fig:am_plot2}, StreamPhy  effectively alleviates these difficulties and maintains high reconstruction accuracy even though such pattern is absent during training, demonstrating strong robustness. Additional reconstruction results are provided in Appx.~\ref{app:add_results}. Overall, these results validate the effectiveness of StreamPhy.

\textbf{Ablation study:}
We conduct an ablation study to evaluate the contributions of the proposed SSM framework, the masking strategy under varying sampling patterns and sparsity levels, and the expressivity gain of FT-FiLM over FTM. We compare the full StreamPhy with three variants: (i) \textit{w.o. SSM}, which removes the state space model (implemented by setting all $\mathbf{X}_t$ in Eq.~\ref{eq:gamma_beta} to zero), (ii) \textit{w.o. mask}, which disables the masking mechanism in the observation encoder, and (iii) \textit{with FTM}, which replaces FT-FiLM with FTM. All variants share the same multi-head encoder. As shown in Table~\ref{table:t2_ab}, removing the SSM leads to a drastic performance degradation across all datasets and settings, underscoring its critical role in capturing long-range temporal dependencies from streaming observations. Disabling the masking strategy also results in consistent performance drops, particularly under slab sampling, highlighting its importance in handling irregular and partially observed inputs and in improving both accuracy and robustness. Replacing FT-FiLM with FTM further degrades performance, demonstrating the superior expressivity of FT-FiLM for high-fidelity reconstruction. Overall, StreamPhy achieves the best performance across all settings, indicating that all components are essential.

\begin{table*}
\centering
\setlength{\tabcolsep}{0.8 pt}
\renewcommand{\arraystretch}{0.9}
\begin{scriptsize}
\begin{tabular}{l|cc|cc|cc||cc|cc|cc}
\toprule
& \multicolumn{6}{c||}{\textbf{Uniform Sampling}} 
& \multicolumn{6}{c}{\textbf{Slab Sampling}} \\

\cmidrule(r){2-7} \cmidrule(l){8-13}

& \multicolumn{2}{c}{\textit{Turbulent Flow}} 
& \multicolumn{2}{c}{\textit{Ocean Sound Speed}} 
& \multicolumn{2}{c||}{\textit{Active Matter}}

& \multicolumn{2}{c}{\textit{Turbulent Flow}} 
& \multicolumn{2}{c}{\textit{Ocean Sound Speed}} 
& \multicolumn{2}{c}{\textit{Active Matter}} \\

& $\rho=3\%$ & $\rho=5\%$ 
& $\rho=1\%$ & $\rho=3\%$ 
& $\rho=1\%$ & $\rho=3\%$

& $\rho=3\%$ & $\rho=5\%$ 
& $\rho=1\%$ & $\rho=3\%$ 
& $\rho=1\%$ & $\rho=3\%$ \\
\midrule

StreamPhy 
&  \textbf{0.0935} & \textbf{0.0696} &\textbf{0.0628}  & \textbf{0.0562} & \textbf{0.0752} & \textbf{0.0732} 
& \textbf{0.1067} & \textbf{0.0978} & \textbf{0.0692} & \textbf{0.0634} & \textbf{0.0905} & \textbf{0.0866} \\

StreamPhy w.o. SSM
& 0.8537 & 0.8501& 0.3935 & 0.3923  & 0.9023 & 0.8687
& 0.8598 & 0.8556 & 0.3966 & 0.3954 & 0.9115 & 0.9107 \\

StreamPhy w.o. mask 
& 0.2420 & 0.1789 & 0.0899 & 0.0786  & 0.0898 & 0.0786 
& 0.2841 & 0.2643 & 0.2030 & 0.1768 & 0.2401 & 0.1511 \\

StreamPhy with FTM 
& 0.1435 & 0.1134 & 0.1054 & 0.1034 & 0.1279 & 0.1229 
& 0.1731 & 0.1574 & 0.1202 & 0.1154 & 0.1522 & 0.1454 \\

\bottomrule
\end{tabular}
\end{scriptsize}
\caption{\small Ablation study on the effectiveness of the introduced SSM framework, the proposed masking strategy and FT-FiLM. All models use the multi-head observation encoder.}
\label{table:t2_ab}
\vspace{-0.2in}
\end{table*}

    \begin{table}
        \centering
        \begin{scriptsize}
        \setlength{\tabcolsep}{3 pt}
            \begin{tabular}{lccc|ccc|ccc}
                \toprule
                  & \multicolumn{3}{c}{\textit{Turbulent Flow}}  & \multicolumn{3}{c}{\textit{Ocean Sound Speed}} &  \multicolumn{3}{c}{\textit{Active Matter}}  \\ 
                     {Methods} & $\rho=3\%$ & $\rho=5\%$ &{$\#$Para.} & $\rho=1\%$ & $\rho=3\%$ & {$\#$Para.}& $\rho=1\%$ & $\rho=3\%$ &{$\#$Para.} \\
                    \midrule
                    %{$\#$Parameter (M)} &  \multicolumn{2}{c}{12}  &{0}  &{0} &{0} &{0} \\
                    {StreamPhy (MH)} & {\textbf{0.0419s}} &{\textbf{0.0448s}} & 10.3M & {\textbf{0.0325s}} &{\textbf{0.0434s}} &9.94M & {\textbf{0.0479s}}  &{\textbf{0.0498s}} & 11M\\
                    {SDIFT~\cite{sdift2025}} &{{5.14s}} &{{5.21s}}& 15M  &{{0.84s}}  &{{0.89s}}& 15M &{{1.31s}} &{{1.42s}} & 12M \\
                \bottomrule
            \end{tabular}
        \end{scriptsize}
        \caption{\small Average inference speed (in seconds) for reconstruction with different observation ratios on uniform sampling patterns on a single record.}
        \label{table:samplingspeed}
        \vspace{-10mm}
    \end{table}
\textbf{Inference Speed:} We compare the inference speed of our method with SDIFT~\cite{sdift2025} on an NVIDIA RTX 4090 GPU (24\,GB memory) on a single record. The results are reported in Tab.~\ref{table:samplingspeed}. It is evident that our method  outperforms SDIFT across all settings, achieving a speedup of approximately $20\times$--$100\times$.
This substantial acceleration arises from two main aspects. 
1) StreamPhy adopts an end-to-end architecture that directly maps observations to sequential reconstruction of the full field, whereas SDIFT~\cite{sdift2025} relies on fully observed temporal data and performs multi-step posterior sampling. In particular, its MPDPS mechanism introduces a Gaussian process with per-frame complexity $O(T^3)$, where $T$ denotes the number of frames, resulting in an overall complexity of $O(T^4)$. In contrast, our method scales linearly with $T$, i.e., $O(T)$. 
2) Benefiting from the strong expressive power of the proposed FT-FiLM module, StreamPhy employs significantly more compact latent representations than SDIFT (e.g., $64$ vs.\ $2304$ ($48\times48$) for the active matter dataset), which further contributes to its superior inference efficiency.

%\textbf{Robustness against Noise:} We evaluated SDIFT with DPS and MPDPS against three noise types: Gaussian, Poisson and Laplacian, with different variance levels, and results are at Tab.~\ref{table:noise_Ocean Sound Speed} in Appx.~\ref{app:noise_rob}, which demonstrates the robustness introduced by proposed MSDPS module.

\section{Conclusion and Limitations}
\vspace{-3mm}
In this work, we presented \textbf{StreamPhy}, an end-to-end framework for streaming inference of continuous spatiotemporal dynamics from sparse observations. By proposing a data-adaptive observation encoder that is robust to missing patterns, together with the expressive FT-FiLM module, and integrating both into a HiPPO-based  state-space model, StreamPhy enables efficient and accurate streaming inference, offering a scalable solution for engineering applications. Experiments across multiple physical systems demonstrate consistent gains in both accuracy and efficiency under diverse challenging sampling patterns. Limitations of this work are provided in Appx.~\ref{app:limitations}.
%Future work includes uncertainty quantification and cross-domain generalization.

%\bibliographystyle{plainnat}
\bibliographystyle{unsrt}
\bibliography{references}

%%%%%%%%%%%%%%%%%%%%%%%%%%%%%%%%%%%%%%%%%%%%%%%%%%%%%%%%%%%%
\newpage
\appendix

\section{Proof of Theorem 1}
\label{app:sec:proof}

\begin{proof}
We proceed in three steps.

\textbf{Step 1: $\overline{\mathcal{F}_{\mathrm{FT\text{-}FiLM}}(\mathbf{R}, V)} = C(\Omega)$.}

We first show that every function in $\mathcal{F}_{\mathrm{FT\text{-}FiLM}}(\mathbf{R}, V)$ is continuous.

Indeed, each $\mathbf{u}^{k}\in C(\Omega_k,\mathbb{R}^{R_k})$ is continuous.
Thus the concatenation
\[
\mathbf{u}_{\mathbf{i}}=\big[\mathbf{u}^{1}(i_1);\ldots;\mathbf{u}^{K}(i_K)\big]
\]
is continuous with respect to $\mathbf{i}$.
The affine map $\mathbf{z}\mapsto \mathbf{W}_g \mathbf{u}_{\mf{i}} + \mathbf{w}_b$ is continuous,
and $f_{\boldsymbol{\varphi}}$ is continuous by definition of $\mathcal{N}$.
Therefore the composition
\[
\mathbf{i}\mapsto f_{\boldsymbol{\varphi}}(\mathbf{W}_g \mathbf{u}_{\mathbf{i}}+\mathbf{w}_b)
\]
is continuous.

Since uniform limits of continuous functions on the compact set $\Omega$ are still continuous,
we conclude
\[
\overline{\mathcal{F}_{\mathrm{FT\text{-}FiLM}}(\mathbf{R}, V)} \subseteq C(\Omega).
\]

Then, fix any $h \in C(\Omega)$ and any $\varepsilon > 0$.
Since each $R_k \geq 1$, choose
\begin{equation}
\mathbf{u}^{k}(i_k) = (i_k, 0, \ldots, 0)^{\!\top} \in \mathbb{R}^{R_k},
\qquad k=1,\ldots,K.
\end{equation}
Then $\mathbf{u}_{\mathbf{i}} \in \mathbb{R}^S$ contains the entries
$i_1, \ldots, i_K$ at fixed coordinates and zeros elsewhere. Because
$V \geq K$, we may choose $\mathbf{W}_g \in \mathbb{R}^{V \times S}$ and
$\mathbf{w}_b = \mathbf{0}$ such that
\begin{equation}
\mathbf{W}_g\, \mathbf{u}_{\mathbf{i}} + \mathbf{w}_b
= (i_1, i_2, \ldots, i_K, 0, \ldots, 0)^{\!\top} \in \mathbb{R}^V.
\end{equation}
Let $\Omega' \subset \mathbb{R}^V$ be a compact set containing the image
of this affine map, and define $\tilde{h} \in C(\Omega')$ by
$\tilde{h}(z_1,\ldots,z_V) = h(z_1,\ldots,z_K)$. By the universal approximation
theorem for feedforward networks with a continuous non-polynomial activation~\cite{hornik1991approximation,cybenko1989approximation}, there exists $f_{\boldsymbol{\varphi}}\in\mathcal{N}$
with
\begin{equation}
\sup_{\mathbf{z}\in\Omega'}\big|\tilde{h}(\mathbf{z}) - f_{\boldsymbol{\varphi}}(\mathbf{z})\big| < \varepsilon, \forall\varepsilon>0.
\end{equation}

Hence
$\sup_{\mathbf{i}\in\Omega}\big|h(\mathbf{i}) - f_{\boldsymbol{\varphi}}(\mathbf{W}_g \mathbf{u}_{\mathbf{i}}+\mathbf{w}_b)\big| < \varepsilon$,
so $h \in \overline{\mathcal{F}_{\mathrm{FT\text{-}FiLM}}(\mathbf{R}, V)}$.

\textbf{Step 2: $\overline{\mathcal{F}_{\mathrm{FTM}}(\mathbf{R})} \subseteq \overline{\mathcal{F}_{\mathrm{FT\text{-}FiLM}}(\mathbf{R}, V)}$.}

Every functional Tucker model is continuous on $\Omega$, so
$\mathcal{F}_{\mathrm{FTM}}(\mathbf{R}) \subseteq C(\Omega) = \overline{\mathcal{F}_{\mathrm{FT\text{-}FiLM}}(\mathbf{R}, V)}$.
Taking closures preserves inclusions, which gives the claim.

\textbf{Step 3: The inclusion is strict.}

We now construct a function that cannot be approximated by $\mathcal{F}_{\mathrm{FTM}}(\mathbf{R})$.

Without loss of generality, let $K=2$, and define
\[
h_{\star}(\mathbf{i})= e^{i_1 i_2}.
\]

Let $\{\alpha_j\}_{j=1}^{M}, \{\beta_k\}_{k=1}^{M}$ be any two sets of distinct real numbers. Define
\[
\mathbf{M}(j,k)=e^{\alpha_j \beta_k}.
\]
Then, according to \cite{chebycheff}, $\mathbf{M}$ has nonzero determinant and is therefore full rank:
\[
\operatorname{rank}(\mathbf{M}) = M.
\]

Suppose, for contradiction, that
\[
h_\star \in \overline{\mathcal{F}_{\mathrm{FTM}}(\mathbf{R})}.
\]
Then there exists a sequence $\{h_n\} \subset \mathcal{F}_{\mathrm{FTM}}(\mathbf{R})$
such that $h_n \to h_\star$ uniformly on $\Omega$.

Define matrices
\[
\mathbf{H}_n(j,k)=h_n(\alpha_j,\beta_k).
\]
Then $\mathbf{H}_n \to \mathbf{M}$ entrywise.

Since each $h_n$ has the form
\[
h_n(\alpha_j,\beta_k)
=
\sum_{r_1=1}^{R_1}\sum_{r_2=1}^{R_2}
g^{(n)}_{r_1,r_2}\,
\mathbf{u}^{1,(n)}_{r_1}(\alpha_j)\,
\mathbf{u}^{2,(n)}_{r_2}(\beta_k),
\]
we denote the corresponding matrix by $\mathbf{H}_n$.
Each term
\[
\mathbf{u}^{1,(n)}_{r_1}(\alpha_j)\,\mathbf{u}^{2,(n)}_{r_2}(\beta_k)
\]
is a rank-$1$ matrix over $(j,k)$, hence
\[
\operatorname{rank}(\mathbf{H}_n) \le R_1 R_2.
\]

Choose $M > R_1 R_2$. Then
\[
\operatorname{rank}(\mathbf{H}_n) < M,
\quad \text{so} \quad
\det(\mathbf{H}_n)=0 \ \text{for all } n.
\]

Since $\mathbf{H}_n \to \mathbf{M}$ entrywise and the determinant is continuous, we obtain
\[
\det(\mathbf{M}) = \lim_{n\to\infty} \det(\mathbf{H}_n) = 0.
\]

However, $\det(\mathbf{M}) \neq 0$, which is a contradiction.

Therefore
\[
h_\star \notin \overline{\mathcal{F}_{\mathrm{FTM}}(\mathbf{R})}.
\]

Combining Steps 1--3 completes the proof.
\end{proof}

\begin{remark}[Scope of Theorem~\ref{thm:ftfilm}]
Theorem~\ref{thm:ftfilm} establishes a \emph{qualitative} expressivity gap:
fixed-rank functional Tucker models cannot uniformly approximate every
continuous function on $\Omega$, whereas FT-FiLM can. The density of FT-FiLM
is driven by the readout network $f_{\boldsymbol{\varphi}}$; thus the theorem
shows that decoupling the rank of the latent factors from the capacity of
the readout, which is a key feature of FT-FiLM, removes the rank-induced
approximation barrier of FTM. 
\end{remark}

\section{Details of Modulation Computation}
\label{app:detail}
Note that, 
\begin{equation}
	\boldsymbol{\Gamma}_t = f_{\boldsymbol{\omega}_1}(\mathbf{X}_t, \mathbf{z}_t), 
	\quad 
	\boldsymbol{\beta}_t = f_{\boldsymbol{\omega}_2}(\mathbf{X}_t, \mathbf{z}_t),
\end{equation}

Here, we provide detailed computations of modulation parameters $\boldsymbol{\Gamma}_t$ and $\boldsymbol{\beta}_t$.
\begin{equation}
	f_{\boldsymbol{\omega}_1}(\mathbf{X}_t, \mathbf{z}_t) = \text{Reshape}(\text{MLP}(\text{Concat}([\text{vec}(\mf{X}_t),\mf{z_t}]))).
	\label{eq:detail1}
\end{equation}
As shown in Eq.~(\ref{eq:detail1}), 
$f_{\boldsymbol{\omega}_1}(\cdot, \cdot):\mathbb{R}^{L \times P} \times \mathbb{R}^{P} \to \mathbb{R}^{ V \times \sum_{k=1}^{K}R_k }$ 
maps a matrix together with a vector to a structured modulation matrix. Specifically, the input observation $\mathbf{X}_t \in \mathbb{R}^{L \times P}$ is first vectorized as $\mathrm{vec}(\mathbf{X}_t) \in \mathbb{R}^{LP}$ and concatenated with the conditioning variable $\mathbf{z}_t \in \mathbb{R}^{P}$ to form a unified representation. This concatenated feature is then processed by a multilayer perceptron to capture nonlinear interactions between spatial observations and conditioning signals. The resulting output is subsequently reshaped via $\mathrm{Reshape}(\cdot)$ into a tensor of dimension $ V \times  \sum_{k=1}^{K} R_k $, where each row corresponds to a feature-wise modulation coefficient over the $V$ latent channels. 

In this way, $f_{\boldsymbol{\omega}_1}$ generates a global modulation dictionary that parameterizes feature-wise scaling factors conditioned jointly on $\mathbf{X}_t$ and $\mathbf{z}_t$, enabling expressive cross-feature and cross-condition interactions within the FT-FiLM framework.

Also, we have
\begin{equation}
	f_{\boldsymbol{\omega}_2}(\mathbf{X}_t, \mathbf{z}_t)=\text{MLP}(\text{Concat}([\text{vec}(\mf{X}_t),\mf{z_t}]))
	\label{eq:detail2}
\end{equation}
As shown in Eq.~(\ref{eq:detail2}), 
$f_{\boldsymbol{\omega}_2}(\cdot, \cdot):\mathbb{R}^{L \times P} \times \mathbb{R}^{P} \to \mathbb{R}^{V}$ 
maps the joint representation of the observed field $\mathbf{X}_t$ and the conditioning variable $\mathbf{z}_t$ to a feature-wise bias vector. In particular, $\mathbf{X}_t$ is first vectorized as $\mathrm{vec}(\mathbf{X}_t) \in \mathbb{R}^{LP}$ and concatenated with $\mathbf{z}_t \in \mathbb{R}^{P}$, forming a unified input embedding. This embedding is then passed through a multilayer perceptron to capture nonlinear dependencies between spatial observations and the conditioning signal. The output is a $V$-dimensional vector, where each entry corresponds to the additive shift applied to the corresponding latent feature channel.

Together with the scaling parameters produced by $f_{\boldsymbol{\omega}_1}$, this bias term enables a complete feature-wise affine modulation in the FT-FiLM layer, thereby enhancing the expressiveness of conditional field representations.
\section{Algorithm}  \label{algorithm}
In our setting, the training data is irregular and sparse observations sampled from $B$ batches of homogeneous physical dynamics at arbitrary timesteps.

We summarize the training process of the StreamPhy in Algorithm\ref{alg:1}.
\begin{algorithm}[h!]
  \caption{Training process}
  \label{alg:1}
  \begin{algorithmic}[1]
    \REQUIRE Batches of training data $\{\{\mathcal{O}_{t_m}^b\}_{m=1}^{M}\}_{b=1}^{B}$, time steps $\mathcal{T}=\{t_1,\dots,t_M\}$. Initialize the latent state $\mf{X}_0 = \boldsymbol{0}$.

    \WHILE{not convergence}
    \FOR{$t=t_1,\dots,t_M$}
      \STATE  Sample a rate from $U[0.1,1]$ and generate mask to randomly mask-out observation points according to the sampled rate.
  \STATE Obtain observation latent $\mf{z}_t$ from observation encoder with the mask through Eq.~(\ref{eq:cross_att}).
  \STATE Compute discretized matrices $\bar{\mf{A}}, \bar{\mf{b}}$ using Eq.~(\ref{eq:hippo}).
      \STATE Update the current latent state using Eq.~(\ref{eq:latent_state}).
      \STATE Reconstruct full field using Eq.~\ref{eq:read_out}.
      \STATE Take gradients on the RMSE of reconstruction errors using Adam~\cite{adam} optimizer.
    \ENDFOR
   \ENDWHILE
    \STATE \textbf{return} Well-trained model parameters.
  \end{algorithmic}
\end{algorithm}

\section{Related Work}
\label{app:related_work}
Tensor-based methods leverage low-rank structures to recover high-dimensional data from sparse observations. 
Recent progress in temporal tensor modeling~\cite{NONFAT, wang2024dynamictensor, bctt, fang2023streamingSFTL} extends classical discrete formulations to continuous-time regimes, while functional tensor approaches~\cite{fang2023functional, luo2023lowrank, chertkov2022optimization_tt, chen2025generalized} represent tensor modes as continuous functions with structured latent parameterizations. 
More recent studies~\cite{catte2025, OFTD2025} further incorporate Neural ODEs or streaming mechanisms to model temporal dynamics.  Despite these advances, such methods typically operate on individual instances, limiting their ability to exploit shared structure and generalize across heterogeneous data.

To address this limitation, generative modeling has recently emerged as an alternative paradigm. 
A growing body of work~\cite{chung2023diffusion, shu2023physics, shysheya2024conditional, huang2024diffusionpde, li2024learning_nmi2} leverages pre-trained diffusion models and demonstrates strong performance across diverse physical modeling tasks. 
However, these approaches generally rely on regularly sampled, well-structured data, restricting their applicability in realistic settings with sparse and irregular observations. 
To mitigate this issue,~\cite{du2024conditional, sdift2025} project irregular observations into a regularized latent space and model the temporal evolution of latent distributions. 
At inference time, full-field reconstruction is achieved by incorporating likelihood-based gradients into the score function as guidance. 
Nevertheless, this two-stage pipeline is prone to accumulated approximation errors and reconstruction artifacts, while incurring significant computational overhead due to repeated forward passes and gradient evaluations.

Structured state space models (SSMs) have recently emerged as an efficient and principled framework for long-range sequence modeling. 
Early work such as HiPPO~\cite{hippo} introduces a continuous-time memory mechanism based on optimal polynomial projections, providing a theoretically grounded way to compress streaming signals into finite-dimensional states. 
Building on this foundation, S4~\cite{lssl} develops a scalable parameterization of linear state space systems, enabling efficient modeling of long sequences through structured transition matrices and fast convolutional implementations. 
Subsequent work further unifies recurrent, convolutional, and continuous-time formulations via linear state space layers~\cite{s4}, offering a flexible framework that bridges different sequence modeling paradigms. 
More recently, selective state space models such as Mamba~\cite{mamba} introduce input-dependent state transitions, significantly enhancing modeling capacity while retaining linear-time complexity. 
Despite their strong performance, these approaches typically assume regularly sampled inputs and are not explicitly designed for sparse and irregular observations in spatiotemporal reconstruction tasks.

\section{Implementation details}
\label{app:implementation_detail}
All the methods are implemented with PyTorch \cite{paszke2019pytorch} and trained  using Adam \cite{adam} optimizer with the  learning rate tuned from  $\{5e^{-4}, 1e^{-3}, 5e^{-3}, 1e^{-2}\}$.

For StreamPhy, we set $L=32, h=4, D=512, P=64$ for all datasets. While $\mathbf{R}=(64,64)$ for Turbulent Flow, $\mathbf{R}=(3,15,30)$ for Ocean Sound Speed and $\mathbf{R}=(128,128)$ for Active Matter.

For OFTD, we adopt a two-stage online CP decomposition scheme with an initial window of $t_{\mathrm{initial}}=5$ frames and an online update step of $\Delta t=1$. We parameterize the temporal factor using a two-layer SIREN followed by a linear output layer, with $\omega_0=0.3$. We train the initial fitting stage for 4000 iterations and each subsequent online update for 100 iterations, while holding out 10\% of the observed entries for validation and using early stopping with patience 10. Across datasets, we set the CP rank and hidden width to $(256,256)$ for \textit{active matter} and $(64,192)$ for both \textit{SSF} and \textit{turbulent flow}.

For {LRTFR, we parameterize each mode-wise latent factor using a two-layer SIREN with a linear output layer, and reconstruct the full field through a low-rank core tensor. We train the model using Adam with learning rate $10^{-4}$ and weight decay $0.1$ for 3001 iterations, again reserving 10\% of the observed entries for validation. The hidden width is set to 192 for \textit{active matter} and 128 for both \textit{SSF} and \textit{turbulent flow}. The mode-wise downsampling factors are set to $\{8,8,4\}$, $\{2,8,8,8\}$, and $\{8,4\}$ for the three datasets, respectively, and the rank of each mode is determined adaptively as the corresponding mode size divided by its downsampling factor.

For Senseiver, we use 32 spatial Fourier bands for positional encoding in all cases. For \textit{active matter}, we additionally enable time as an extra input dimension, and set the encoder preprocessing width to 96, the number of latent tokens to 12, the encoder and decoder latent widths to 48, the number of encoder layers to 4, the numbers of cross-attention and self-attention heads to 4, and the number of self-attention layers per block to 3. For \textit{SSF} and \textit{turbulent flow}, we use the default configuration with encoder preprocessing width 64, 4 latent tokens, encoder and decoder latent widths of 16, 3 encoder layers, 2 encoder cross-attention heads, 2 encoder self-attention heads, and 1 decoder cross-attention head.

 For CATTE,  we set  the ODE state dimension $J=15$ and  the initial number of components of the factor trajectories $R = 15$.
 We used two hidden layers for velocity network, with the layer width chosen from $\{128, 256, 512\}$. For LRTFR, we  used two hidden layers with  layer width chosen from $\{128, 256, 512\}$ to  parameterize the latent function of each mode. We varied $R$  from $\{16,32\}$ for all baselines.
For Senseiver, we used 128 channels in both the encoder and decoder, a sequence size of 256 for the $Q_{in}$ array, and set the size of the linear layers in the encoder and decoder to 128.

For SDIFT, we first apply a functional Tucker model to decompose the tensor into factor functions and a core sequence. Each factor function is parameterized by a three-layer MLP, where each layer contains 1024 neurons and uses the sine activation function. The core sizes are set to $48 \times 48$, $3 \times 12 \times 12$, and $48 \times 48$ for the \textit{Turbulent Flow}, \textit{Ocean Sound Speed}, and \textit{Active Matter} datasets, respectively. These hyperparameters are carefully selected to achieve optimal performance.

\subsection{Definition of Variance-scaled Root Mean Squared Error}
\label{app:vrmse}
Let $\{\hat{y}_i\}_{i=1}^N$ and $\{y_i\}_{i=1}^N$ denote the predicted and ground-truth entry, respectively. Assume that there are $N$ points in total. The Variance-scaled Root Mean Squared Error (VRMSE) is defined as
\begin{equation}
    \mathrm{VRMSE} = \frac{\displaystyle\sqrt{\frac{1}{N}\sum_{i=1}^{L} (\hat y_i - y_i)^2}}
     {\displaystyle\sqrt{\frac{1}{N}\sum_{i=1}^{L} (y_i - \bar y)^2}},
\end{equation}
where $\bar y$ is the mean of all points.

\section{Additional experiment results}
\label{app:add_results}

 \begin{figure}[h]
 	\centering
 	\includegraphics[width=\linewidth]{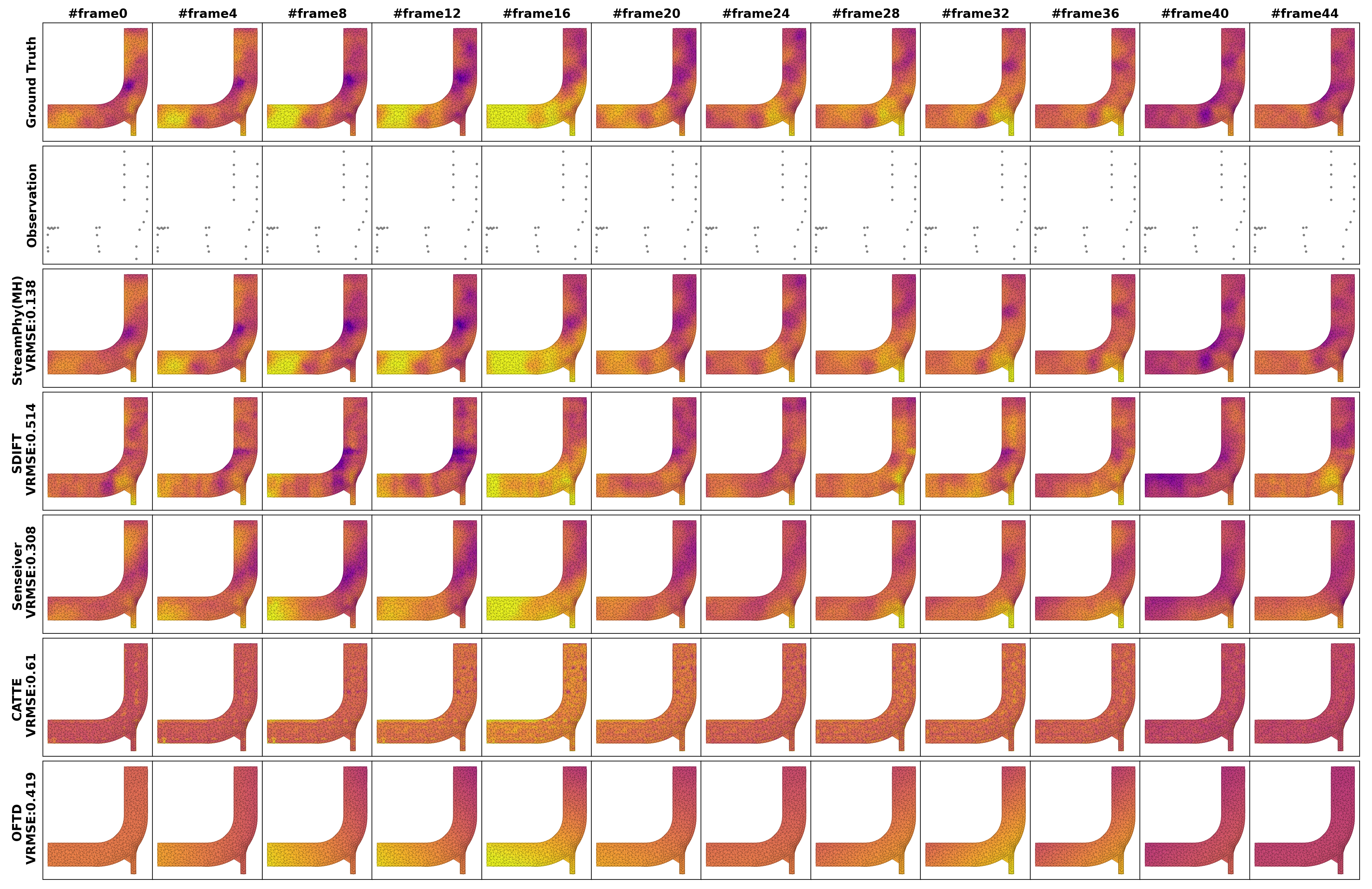}
 	\caption{Reconstruction of \textit{Turbulent Flow} dynamics under slab sampling pattern with \(\rho = 3\%\).}
 	\label{fig:tf_plot2}
 	\vspace{0in}
 \end{figure}
 \begin{figure}[h]
 	\centering
 	\includegraphics[width=\linewidth]{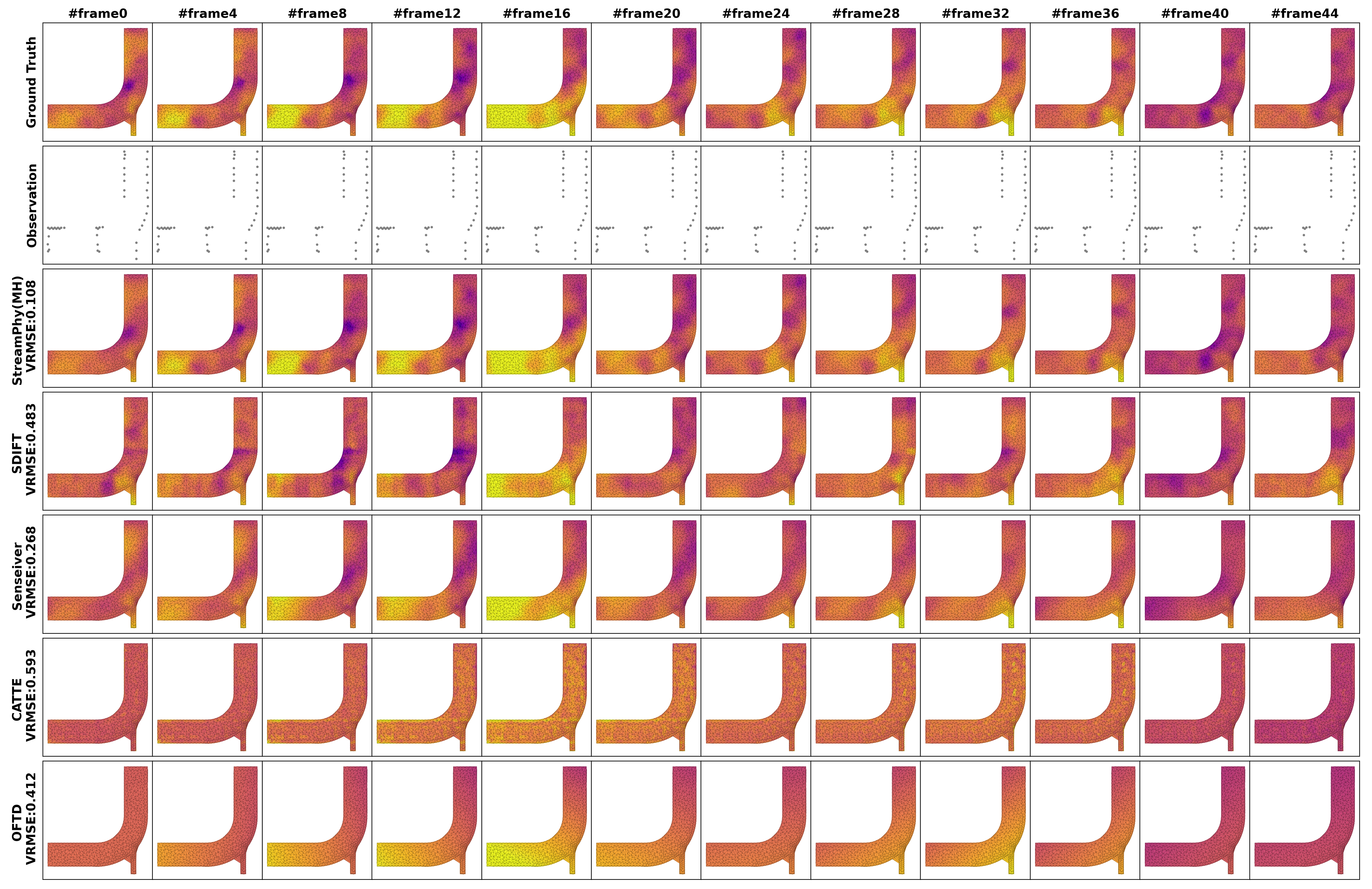}
 	\caption{Reconstruction of \textit{Turbulent Flow} dynamics under slab sampling pattern with \(\rho = 5\%\).}
 	\label{fig:tf_plot3}
 	\vspace{0in}
 \end{figure}
 \begin{figure}[h]
 	\centering
 	\includegraphics[width=\linewidth]{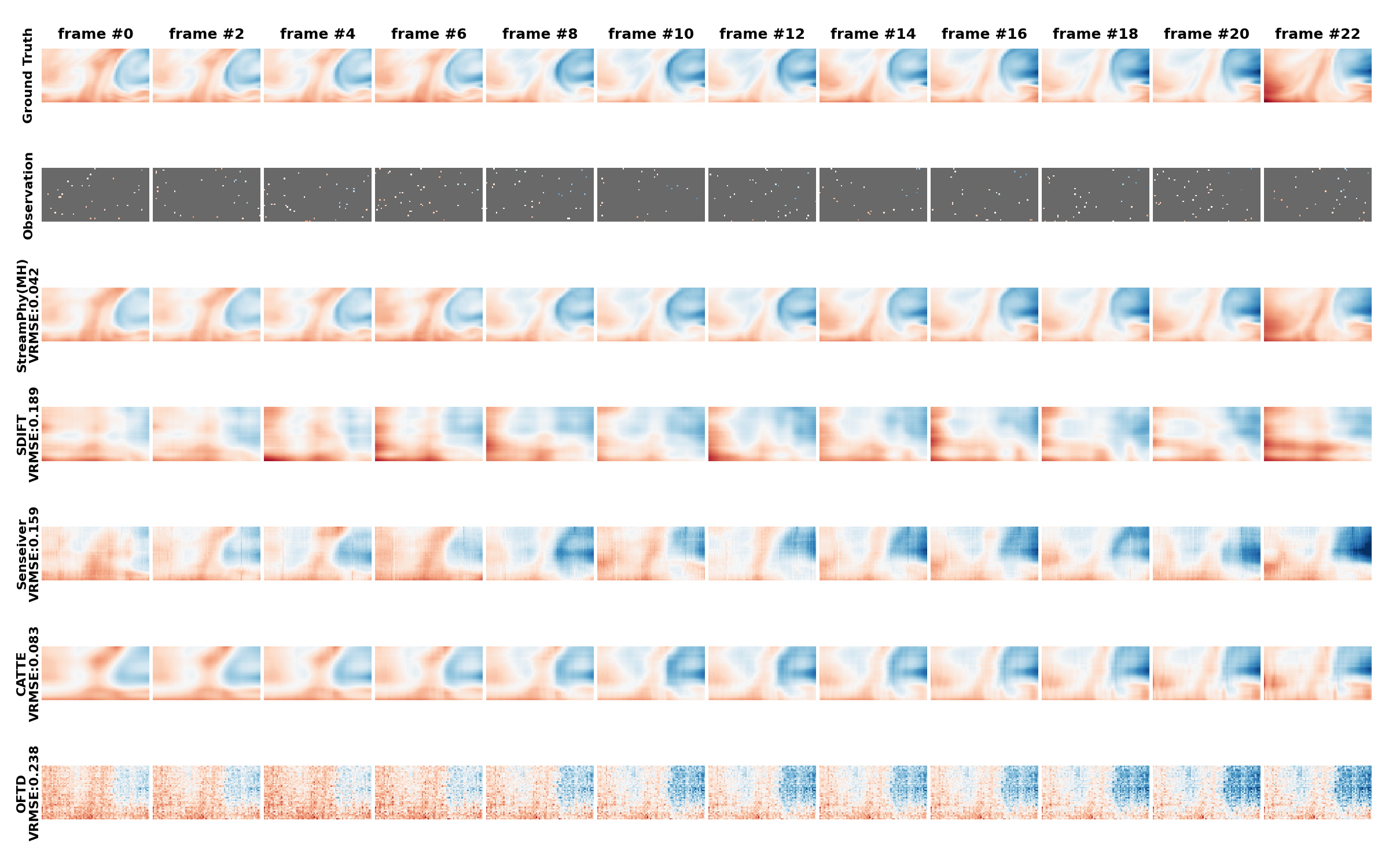}
 	\caption{Reconstruction of \textit{Ocean Sound Speed} dynamics under uniform sampling pattern with \(\rho = 1\%\).}
 	\label{fig:ssf_plot1}
 	\vspace{0in}
 \end{figure}
 \begin{figure}[h]
 	\centering
 	\includegraphics[width=\linewidth]{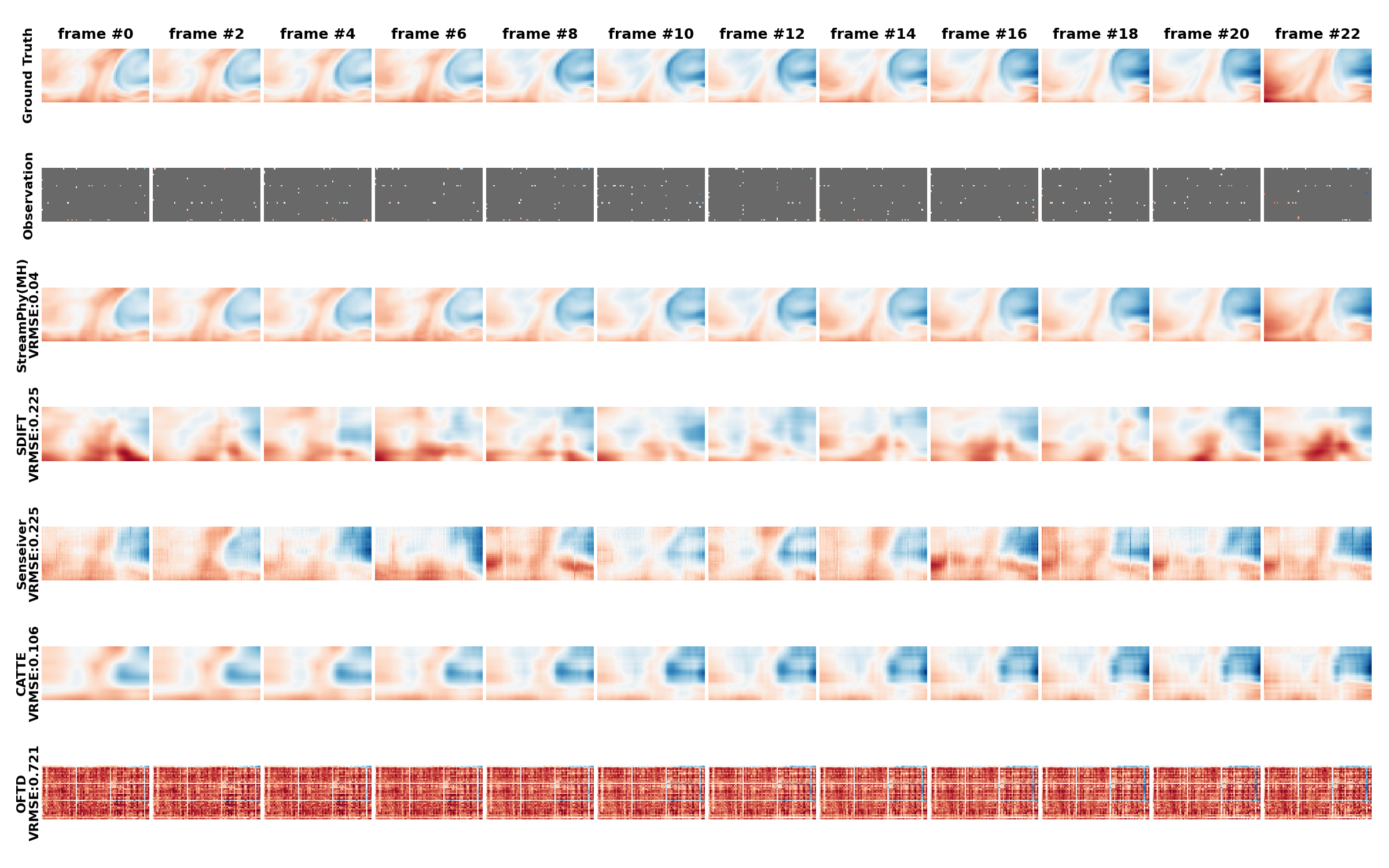}
 	\caption{Reconstruction of \textit{Ocean Sound Speed} dynamics under slab sampling pattern with \(\rho = 1\%\).}
 	\label{fig:ssf_plot2}
 	\vspace{0in}
 \end{figure}
 \begin{figure}[h]
 	\centering
 	\includegraphics[width=\linewidth]{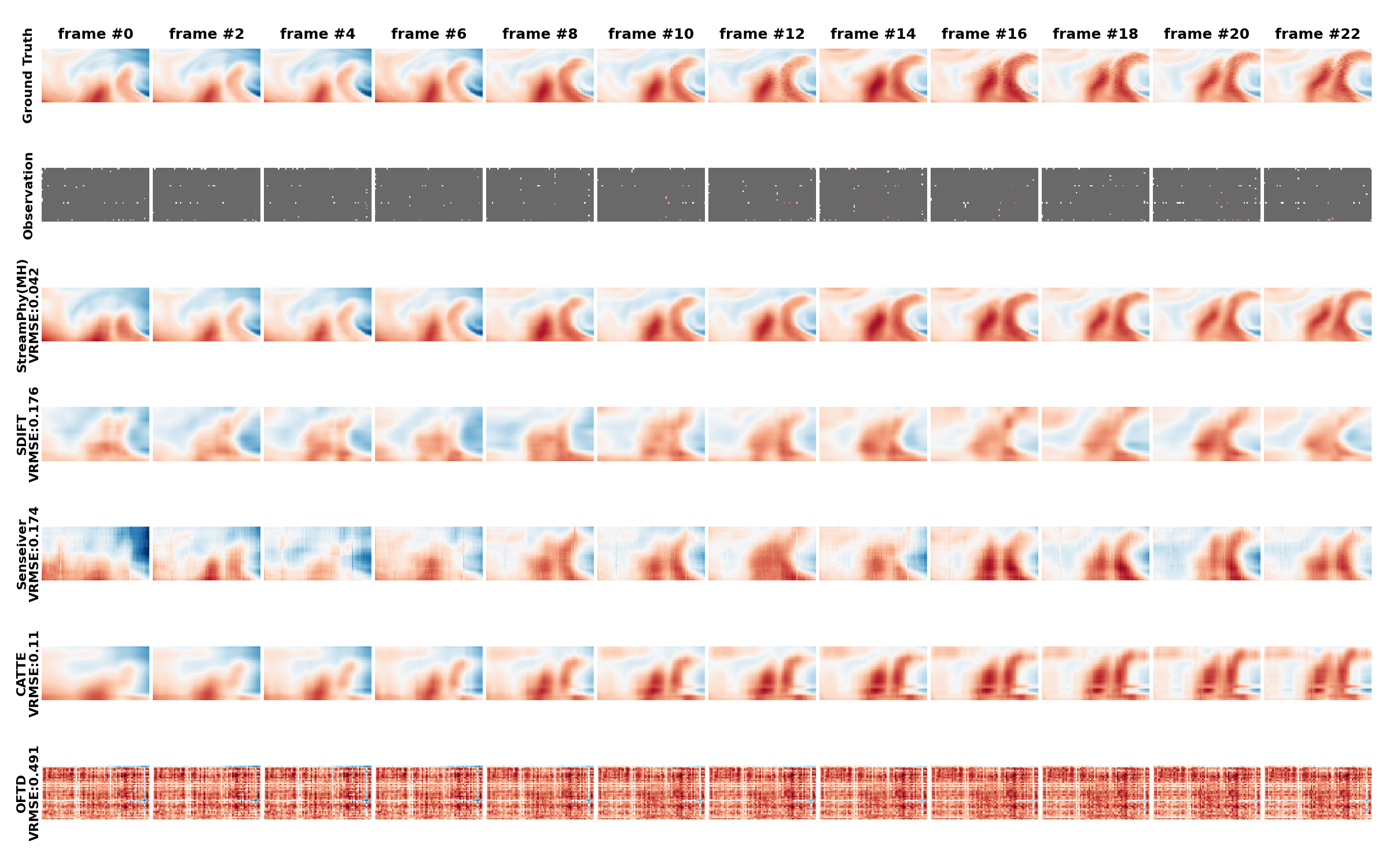}
 	\caption{Reconstruction of \textit{Ocean Sound Speed} dynamics under slab sampling pattern with \(\rho = 1\%\).}
 	\label{fig:ssf_plot3}
 	\vspace{0in}
 \end{figure}
 \begin{figure}[h]
 	\centering
 	\includegraphics[width=\linewidth]{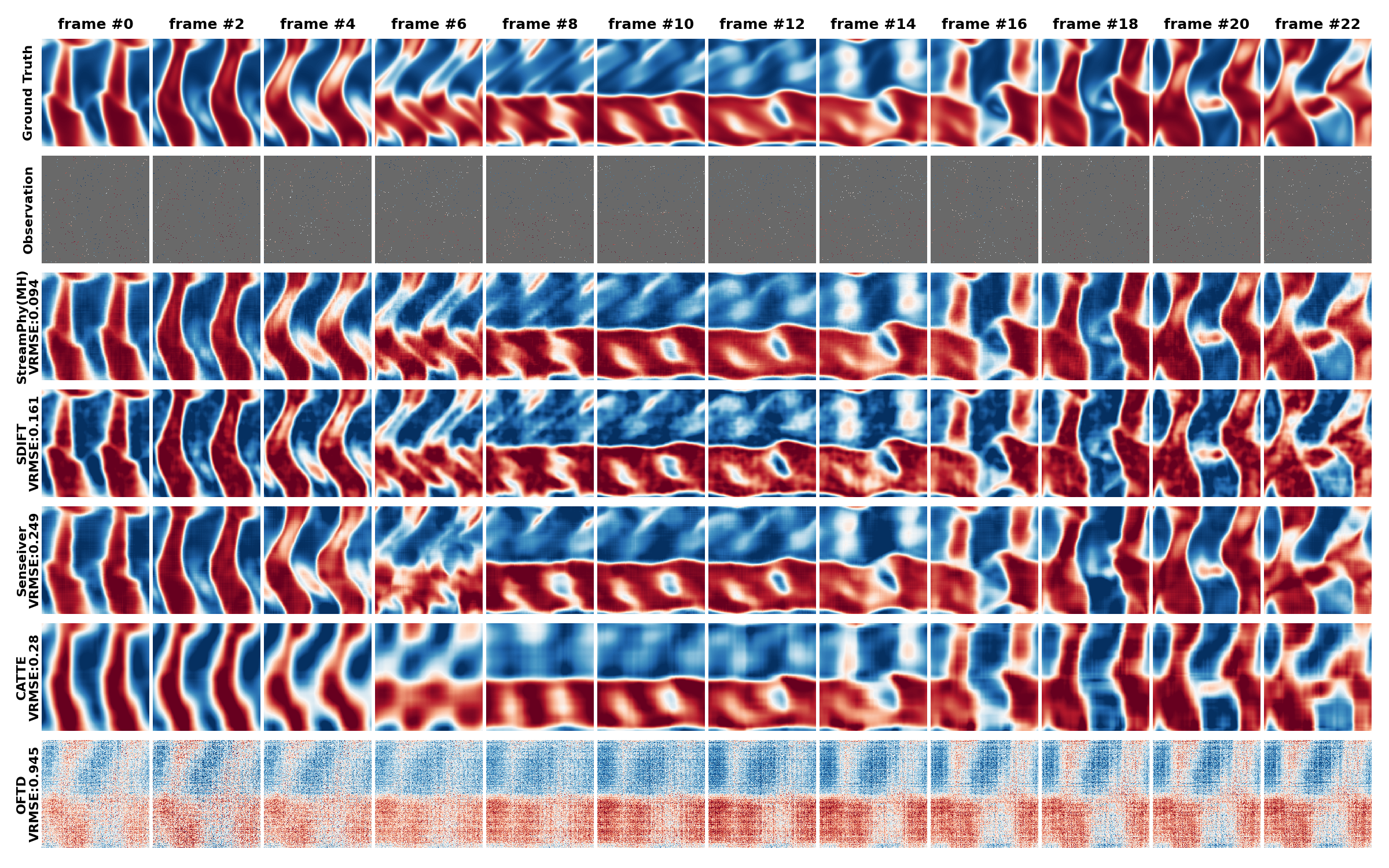}
 	\caption{Reconstruction of \textit{Active Matter} dynamics under random sampling pattern with \(\rho = 1\%\).}
 	\label{fig:ssf_plot3}
 	\vspace{0in}
 \end{figure}

\newpage

\section{Limitations}
\label{app:limitations}
A current limitation of our work is the lack of explicit incorporation of physical laws into the modeling process. Also, the transition matrix $\mf{A}$ in the SSM is predefined  and its design space remains  unexplored. In future work, we plan to integrate StreamPhy with domain-specific physical priors to enable more accurate long-range and large-scale reconstruction of physical fields. We also aim to investigate more flexible parameterizations of  $\mf{A}$ to further improve modeling capacity and performance while preserving the long range dependencies.

\section{Impact Statement}
\label{app:b_i}
This paper focuses on advancing physical fields modeling techniques. We are  mindful of the broader ethical implications associated with technological progress in this field. Although immediate societal impacts may not be evident, we recognize the importance of maintaining ongoing vigilance regarding the ethical use of these advancements. It is crucial to continuously evaluate and address potential implications to ensure responsible development and application in diverse scenarios.

\clearpage

%%%%%%%%%%%%%%%%%%%%%%%%%%%%%%%%%%%%%%%%%%%%%%%%%%%%%%%%%%%%

%\newpage
%\input{checklist.tex}

\end{document}